 \newtheorem{arrangement}[theorem]{Arrangement}
\begin{document}
\title{Bounds for the VC Dimension of 1NN Prototype Sets}

\author{\name Iain A. D. Gunn \email iain.a.d.gunn@gmail.com\\
	\name Ludmila~I.~Kuncheva \email l.i.kuncheva@bangor.ac.uk\\
	\addr School of Computer Science,\\
	      Bangor University,\\
	      Dean Street,\\
	      Bangor, Gwynedd,\\
	      Wales LL57 2NJ,\\
	      UK}
	
\editor{N/A}

\maketitle

\begin{abstract}
In Statistical Learning, the Vapnik-Chervonenkis (VC) dimension is an important combinatorial property of classifiers.  To our knowledge, no theoretical results yet exist for the VC dimension of edited nearest-neighbour (1NN) classifiers with reference set of fixed size.  Related theoretical results  are scattered in the literature and their implications have not been made explicit.  We collect some relevant results and use them to provide explicit lower and upper bounds for the VC dimension of 1NN classifiers with a prototype set of fixed size. We discuss the implications of these bounds for the size of training set needed to learn such a classifier to a given accuracy. Further, we provide a new lower bound for the two-dimensional case, based on a new geometrical argument.
\end{abstract}

\begin{keywords}
 Machine learning, classification, VC dimension, prototype generation, statistical learning theory, nearest neighbour
\end{keywords}

\section{Introduction}
\label{introduction}

The VC dimension is a measure of what is called the ``capacity'' of a classification algorithm: that is, roughly speaking, its flexibility or expressive power.  The practical interest of this quantity arises from its role in the \emph{Probably Approximately Correct} learning model and related models (see e.g. \citet[ch. 3]{Shais}, \citet{Holden95}), where it appears in results relating the size of the training set to the classifier's accuracy.  The VC dimension is defined in terms of the concept of \emph{shattering}~\citep{Vapnik98}. For a two-class problem in $d$-dimensional space $\mathbb{R}^d$, a set $A$ of functions $\mathbb{R}^d \to \{+1,-1\}$ is said to shatter a set of points (in $\mathbb{R}^d$) if any labelling of that set can be given by an element of $A$.  The VC dimension of $A$ is the size of the largest set which can be shattered by $A$.

The nearest-neighbour (1NN) classification rule is a classic technique of supervised learning, first introduced by \cite{Fix52}.  The 1NN rule determines a label for (``classifies'') a given point in a metric space by assigning it the label of the nearest point in a previously determined set of labelled reference points. In the original and simplest algorithm, the reference set is the set of all the training data.  If the amount of training data is \emph{a priori} unbounded, then the VC dimension of the set of associated 1NN-rule classifying functions is infinite: trivially, a set of points of any size is labelled correctly by a classifier whose reference set is the same set of labelled points.

However, it is often not practical to store all the training data, especially in an era of ``big data''.  Therefore  many algorithms have been proposed which learn a smaller reference set from the training data: \cite{Garcia12} and \cite{Triguero12} survey more than 75 such algorithms between them.  What is the VC dimension associated with these ``editing'' NN algorithms?  If the reference set may grow without bound, then the VC dimension is infinite, as for the na{\"i}ve classifier described above.  But if the reference set is constrained not to exceed a given size, then the VC dimension is finite. The main results of the present work are lower and upper bounds for the VC dimension of the set of all 1NN-rule classifiers which use a reference set of given fixed size, in Euclidean space.  

It should be noted that, in general, the size of the reference set to be formed by an editing algorithm is not fixed \textit{a priori}.  There are exceptions, an important one being the case where a classifier for a data stream is kept current by using a fixed window of the most recent $N$ points as its reference set \citep{GunnNeurocomputing}.  But it is very common to impose a \emph{maximum} size limit on the reference set, if only by hand in an \textit{ad hoc} fashion.  Our upper bounds will apply to any algorithm for which the size of the reference set is bounded, though our lower bounds will apply only where the size is prescribed, and the algorithm is capable of returning any reference set of that size.

We believe that there is significant interest in the VC dimension of prototype classifiers with  fixed-size reference sets, and that this is shown by the fact that an incorrect purported result \cite[Proposition 2]{Karacali} has been cited more than 50 times as giving the VC dimension of such classifier sets.  Our study corrects the record.

We feel there is significant value in bringing together the existing theoretical results, some of which seem otherwise likely to remain obscure to practitioners.  Beyond our deductions from existing theory, our novel contributions are 1) A new lower bound for the two-dimensional case, higher than that implied by previous results for polytopes (Proposition \ref{OneBetter}),  and 2) an upper bound for all dimensions, slightly less tight than the best that can be deduced from existing theory, but avoiding the use of exotic functions and thereby facilitating a discussion of the asymptotic behaviour of the limits (Corollary \ref{LooseCor}).  

We will discuss the theoretical framework in section \ref{NNinVC}, and briefly review the relevant literature in section \ref{lit}. In sections \ref{lower2d} and \ref{lowerdd} we derive lower bounds for the VC dimension, and in section \ref{upper} we determine upper bounds.  Our results are summarised in section \ref{conclusions}.

\section{1NN classifiers in the VC theory}
\label{NNinVC}

We use ``classifier'' to mean a function from the feature space to the set of classes, which is used to classify unlabelled examples. An algorithm which learns such a function from training data is a ``classification algorithm''. The set of all classifiers which the algorithm might produce in response to all sets of training data is called the ``hypothesis class'' of that algorithm.  (For example, the original Rosenblatt Perceptron selects among all functions which map one half-space to one class, and the other half-space to the other class: this set of functions is the hypothesis class of the Perceptron.)  When we speak of the VC dimension of a classification algorithm, we mean the VC dimension of the hypothesis class from which that algorithm learns a classifier.


NN classification algorithms 
are not usually thought of as selecting a classifier from a fixed hypothesis class in this way~\citep[see e.g.][ch.\ 19]{Shais}. 
This is because the great practical advantage of 1NN classification is that a classifier function does not need to be explicitly evaluated when classifying an unlabelled example; the new point is simply assigned the class of the nearest prototype in the reference set, which can be identified efficiently using a $k$-d tree.  However, this process is equivalent to classifying the new point according to a classifier function which maps the Voronoi cell of each prototype to the label of that prototype.  Thus, the hypothesis class of the 1NN rule with a reference set of $m$ prototypes in $d$-dimensional space is the set of all labellings of all $m$-cell Voronoi diagrams in the space; it is parameterised by the $md$ co-ordinates of the $m$ prototypes, and the $m$ choices of label.  

We consider only features which take real values, thus classifiers whose domain is $\mathbb{R}^d$, and consider only the 1NN rule with the Euclidean metric. $\mathrm{1NN}(d,m)$ will denote the set of all classifiers $g:\mathbb{R}^d \to \{+1,-1\}$ which use the (Euclidean) nearest-neighbour rule with a reference set of size $m$.  $\mathrm{VCdim}(A)$ will denote the VC dimension of a set $A$ of classifiers.  The purpose of this paper is to give lower and upper bounds for $\mathrm{VCdim}(\mathrm{1NN}(d,m))$.



\section{Related Work}
\label{lit}

A number of existing theoretical results have implications for the VC dimension of the NN classifier with arbitrary reference set of fixed size.  However, these results are scattered in the literature and their implications have not been made explicit. In particular, the results we use to establish lower bounds were developed for a class of polytope classifiers, without mention of NN classifiers. In the present section we give a brief overview of relevant theoretical work, starting with some brief historical context and going on to include the work whose implications we will directly use in subsequent sections.

Questions of ``separating capacities'' of families of decision surfaces were already considered before the advent of the Vapnik-Chervonenkis theory (see e.g.\ \cite{Cover65} and references therein).  During and shortly after the years of the initial development of the VC theory, several authors used approaches from classical combinatorial geometry to derive results about the VC dimension (or related separability properties) of several simple sets.  Readers interested in this literature will need to be aware of the distinction between the VC dimension of a set of classifiers, as we have defined it above, and the VC dimension of a set of subsets of the Euclidean space in question (called by some authors a ``concept class'').  See our discussion at the start of section \ref{upper}, and~\citet[pp. 196, 199, and 215]{Devroye96}.

To give an example of the results obtained, \citet{Dudley79} reports that the set of balls in $\mathbb{R}^d$ has VC dimension $d+1$.  (N.B. Dudley's quantity $V$ is one greater than the quantity defined as the VC dimension in more recent literature.)  Similarly, it may be shown that the set of all half-spaces in $\mathbb{R}^d$ has VC dimension $d+1$~\citep[see e.g.][ch. 13]{Devroye96}.

More recent authors have considered the intersection or union of half-spaces, which is to say, sets which are the interior or exterior of (possibly unbounded) polytopes. \citet{Blumer89} show that the set of interiors of $N$-gons has VC dimension $2N + 1$.  This result is quoted by \citet{TakacsPataki} as the starting point for their work in which they find upper and lower bounds for the VC dimension of convex polytope classifiers, from which we will derive a lower bound in section~\ref{lowerdd}.

NN classifiers, like convex polytope classifiers, have decision boundaries which are the union of subsets of hyperplanes.  However, the decision region for a NN classifier is not in general a simple intersection or union of half-spaces; it may be a complicated union of the interiors of polytopes formed by such intersections.  This may explain why the question of the VC dimension of the NN classifier with an arbitrary reference set of fixed size $m$ has not previously, to our knowledge, been addressed, other than in~\cite{Karacali}.  (\citet{Devroye96} consider the closely related property of the \emph{shatter coefficient} of such classifiers; we will make explicit the implications of this work in section \ref{upper}.)

\citet[Proposition 2]{Karacali} claim that the VC dimension of the NN classifier with reference set of size $m$, $\mathrm{VCdim}(\mathrm{1NN}(d,m))$ in our notation, is exactly $m$. Their argument consists of exhibiting a set of $m+1$ points which cannot be correctly classified by $m$ prototypes. They do not argue that a general set of $m+1$ points cannot be shattered.  This apparently reflects a misunderstanding of the definition of VC dimension.  The VC dimension is the largest number for which some set of that size can be shattered, not the largest number such that all sets of that size can be shattered.  The latter quantity is called the Popper dimension, a quantity 
which thus far has not found a r\^{o}le in statistical learning theory~\citep{Corfield09}.

\section{Lower bounds - two dimensions}
\label{lower2d}
Tak{\'a}cs and Pataki \citep{Takacs,TakacsPataki} prove bounds for the VC dimension of sets of classifiers whose decision boundaries are convex polytopes.  These sets can easily be related to sets of nearest-neighbour classifiers, giving lower bounds for the latter.  This is because a decision boundary which is an $N$-faceted convex polytope can be obtained as the decision boundary of a 1NN classifier with $N+1$ prototypes, by placing a prototype of one label inside the polytope, and the remaining $N$ prototypes, with the opposite label, as the reflections of the first prototype in the $N$ facets of the desired decision boundary.  See Figure \ref{fig:PolygonVoronoi} for an example construction with $N=5$, $d=2$. We formalise this observation as Proposition \ref{subset} below, and give a formal proof in the appendices.

\begin{figure}[htb]
 \centering
 \includegraphics[width=0.6\linewidth]{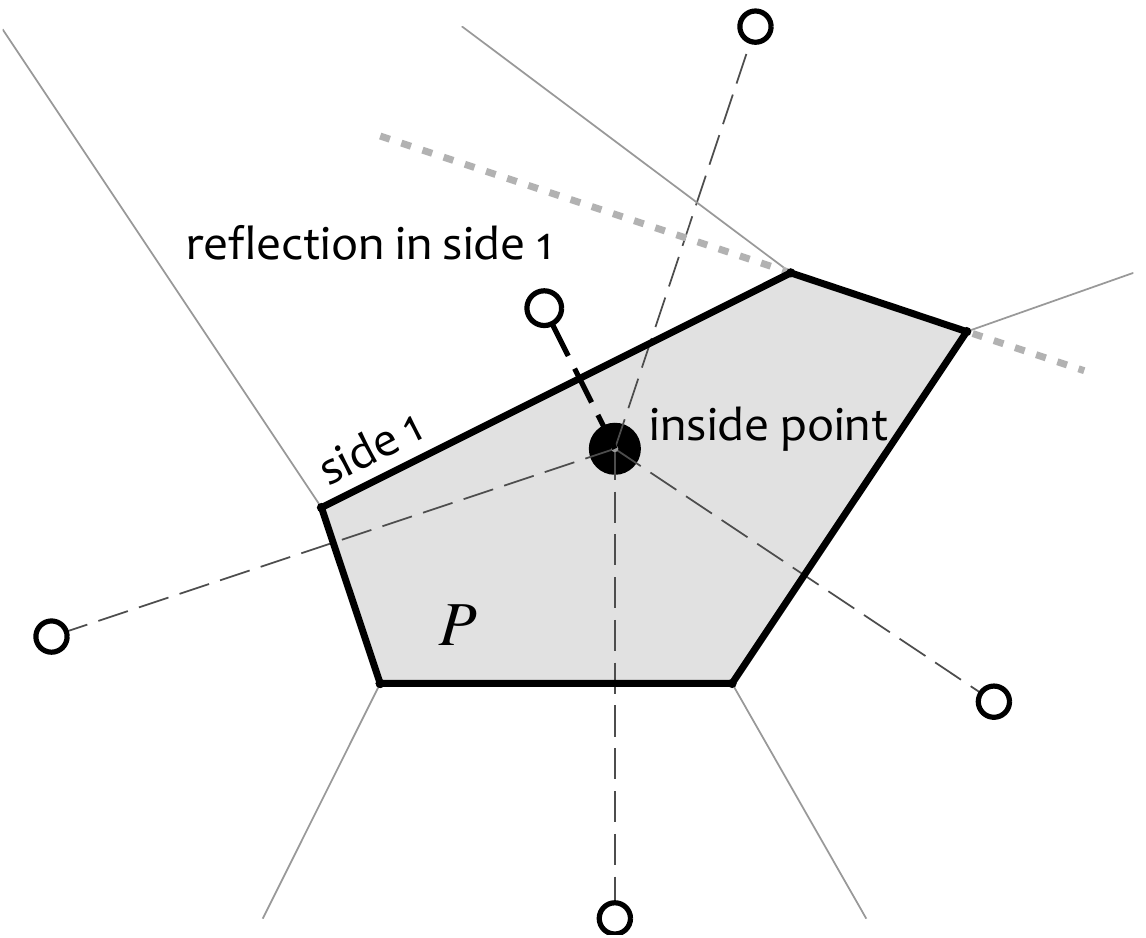}
 \caption{Illustration of the construction of a data set whose classification region is determined by a given convex polygon $P$.  The required prototype set (filled and empty circles) is constructed by reflecting an arbitrary interior point in (the lines containing) the edges of $P$. Voronoi boundaries are shown; the Voronoi cell of the interior point is the decision region of the classifier, coincident with $P$.}
 \label{fig:PolygonVoronoi}
 \end{figure}

We will use $G(d,N)$ to denote the set of classifiers $g:\mathbb{R}^d \to \{+1,-1\}$ whose decision boundary is a convex $N$-faceted polytope. 

\begin{lemma}
 \label{subset}
 The set of convex $N$-faceted polytope classifiers is a subset of the set of NN classifiers with reference set of size $N+1$, in the same Euclidean space.  That is, 
  \begin{equation}
 G(d,N) \subseteq \mathrm{1NN}(d,N+1).
 \end{equation}
\end{lemma}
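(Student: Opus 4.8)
The plan is to realise each convex $N$-faceted polytope classifier explicitly as a $1$NN classifier on a reference set of exactly $N+1$ prototypes, via the reflection construction illustrated in Figure~\ref{fig:PolygonVoronoi}. I would fix a classifier $g \in G(d,N)$ whose decision boundary is the boundary of a convex polytope $P$, and write $P = \bigcap_{i=1}^{N} H_i^{-}$ as the intersection of the $N$ closed half-spaces $H_i^{-}$, where $H_i$ is the hyperplane containing the $i$-th facet (this facet description is automatically irredundant, since an $N$-faceted polytope has exactly $N$ facets). I would then pick any point $p_0$ in the interior of $P$, and for each $i$ let $p_i$ be the reflection of $p_0$ in $H_i$. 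The reference set is $\{p_0, p_1, \dots, p_N\}$, with $p_0$ labelled by the class that $g$ assigns to the interior of $P$, and each $p_i$ labelled by the opposite class.

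The central step will be to identify the Voronoi cell of $p_0$ with $P$ itself. For each $i$, the set of points equidistant from $p_0$ and its mirror image $p_i$ is, by the defining property of reflection, exactly the hyperplane $H_i$; moreover the side of $H_i$ on which the interior point $p_0$ lies is precisely the interior side $H_i^{-}$. Hence $\{x : \|x-p_0\| \le \|x-p_i\|\} = H_i^{-}$, and intersecting over all $i$ gives
\begin{equation}
\mathrm{Vor}(p_0) = \bigcap_{i=1}^{N} H_i^{-} = P.
\end{equation}
Since the remaining $N$ prototypes all carry the opposite label, the $1$NN rule assigns the interior class to $P$ and the exterior class everywhere else, so the constructed classifier agrees with $g$ off their common decision boundary $\partial P$. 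The opposite global labelling of $g$ is obtained by simply swapping the labels on the prototypes, so both members of $G(d,N)$ sharing the boundary $\partial P$ are captured, and the inclusion follows.

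Two bookkeeping points would need checking to make this rigorous, and I expect the verification they require to be the main (if modest) obstacle, the construction itself being elementary. First, one must confirm that the $N+1$ prototypes are genuinely distinct, so that the reference set really has size $N+1$: because $p_0$ lies on no facet hyperplane it differs from every $p_i$, and a short calculation with the reflection formula shows that reflecting $p_0$ in two distinct hyperplanes yields distinct images, the only coincidence arising when the hyperplanes are identical, which they are not. Second, one must confirm that the Voronoi diagram of the constructed set reproduces $P$ exactly, that is, that every facet of $P$ is recovered and that no spurious Voronoi wall appears inside $P$; this follows from the irredundancy of the facet description together with the bisector computation above. The remaining discrepancy is only the (measure-zero) matter of reconciling the tie-breaking convention on $\partial P$ with the labelling convention of the polytope classifier, which does not affect the argument away from the boundary.
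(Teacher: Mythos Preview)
Your proposal is correct and follows essentially the same reflection construction as the paper's proof: choose an interior point $p_0$, reflect it in each of the $N$ facet hyperplanes to obtain the remaining prototypes, and identify the Voronoi cell of $p_0$ with $P$ via the bisector computation. Your version is, if anything, slightly more careful than the paper's in explicitly checking distinctness of the prototypes, handling both global labellings, and flagging the measure-zero tie-breaking issue on $\partial P$, but the underlying argument is the same.
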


\begin{corollary}
\label{impliesLowerBound}
A lower bound for the VC dimension of $G(d,N)$ is also a lower bound for the VC dimension of $\mathrm{1NN}(d,N+1)$.
\end{corollary}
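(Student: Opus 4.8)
The plan is to observe that this corollary follows immediately from Lemma~\ref{subset} together with the monotonicity of VC dimension under set inclusion. The central fact I would establish is the following: if $A \subseteq B$ are two sets of classifiers on the same domain, then $\mathrm{VCdim}(A) \leq \mathrm{VCdim}(B)$. Once this is in hand, the corollary is a one-line deduction.

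First I would prove this monotonicity directly from the definition of shattering. Suppose a finite point set $S$ is shattered by $A$. By definition, every labelling of $S$ is realised by some classifier in $A$. Since $A \subseteq B$, each such classifier also belongs to $B$, so every labelling of $S$ is realised by a classifier in $B$; that is, $S$ is shattered by $B$. Consequently, any set shattered by $A$ is also shattered by $B$, and in particular the largest set shattered by $A$ is no larger than the largest set shattered by $B$, giving $\mathrm{VCdim}(A) \leq \mathrm{VCdim}(B)$.

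Applying this with $A = G(d,N)$ and $B = \mathrm{1NN}(d,N+1)$ — which is valid by Lemma~\ref{subset} — yields $\mathrm{VCdim}(G(d,N)) \leq \mathrm{VCdim}(\mathrm{1NN}(d,N+1))$. Now if $L$ is any lower bound for $\mathrm{VCdim}(G(d,N))$, so that $L \leq \mathrm{VCdim}(G(d,N))$, chaining the two inequalities gives $L \leq \mathrm{VCdim}(\mathrm{1NN}(d,N+1))$. Hence $L$ is also a lower bound for the latter, which is exactly the claim.

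There is essentially no obstacle here; the result is an immediate consequence of monotonicity. The only subtlety worth flagging is that the two sets of classifiers must act on the same domain $\mathbb{R}^d$ for the shattering argument to transfer, which is guaranteed by the phrase ``in the same Euclidean space'' in Lemma~\ref{subset}. I would therefore keep the written proof to a single short paragraph invoking Lemma~\ref{subset} and this monotonicity property.
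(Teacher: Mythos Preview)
Your proposal is correct and matches the paper's approach: the paper likewise notes that the corollary follows immediately from Lemma~\ref{subset}, since any set of points shattered by $G(d,N)$ is shattered by the superset $\mathrm{1NN}(d,N+1)$. Your version is simply a more carefully spelled-out statement of the same monotonicity argument.
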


The corollary follows immediately: any set of points which can be shattered by an element of $G(d,N)$ can be shattered by an element (the same element) of $\mathrm{1NN}(d,N+1)$.

Tak{\'a}cs gives the following result for two-dimensional Euclidean space:

\begin{lemma}
 (Tak{\'a}cs) 
 \label{Takacs1}
 \begin{equation}
 h(G(2,N)) \geq 2N + 2,
 \end{equation}
 for $N \geq 2$.
\end{lemma}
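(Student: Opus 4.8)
The plan is to exhibit an explicit set of $2N+2$ points in $\mathbb{R}^2$ and to show that $G(2,N)$ shatters it. I would exploit the feature of $G(2,N)$ that a classifier may assign \emph{either} label to the interior of its defining convex $N$-facet region; thus labelling a target subset $U$ as $+1$ can be achieved either by a region whose interior contains exactly the points of $U$, or by one whose interior contains exactly the points of $U^c$ (taking the interior label to be $-1$). The configuration I would use is $2N+1$ points equally spaced on a circle together with its centre $o$; the centre is the device that breaks convex position and lets the complement option do real work.

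The geometric engine is a counting fact: since a line meets a circle in at most two points, the boundary of a convex $N$-facet region meets the circle in at most $2N$ points, so the circle-points it contains form at most $N$ circular \emph{arcs} (maximal runs in cyclic order). Conversely, a contiguous arc of excluded circle-points subtending less than a semicircle can be cut away by a single facet (a chord placed just inside it) while every other point, including $o$, is retained. For points in strictly convex position the number of runs of a subset equals that of its complement, so the interior/complement freedom gives nothing and shattering caps at $2N+1$ (matching Blumer et al.'s value for the concept class); the interior point $o$ is precisely what lets the run-counts of $U$ and $U^c$ decouple and yields the extra unit.

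The crux is then a clean reduction. For any target $+1$-set $U$, exactly one of $U$ and $U^c$ contains $o$; I would realise \emph{that} one as the exact set of points inside some convex $N$-facet region and fix the interior label accordingly, which makes the $+1$-set equal to $U$ in either case. Shattering therefore reduces to a single lemma: if a set $W$ of the configuration contains $o$, then $W$ is the exact set of points inside some convex region with at most $r(S_W)\le N$ facets, where $S_W$ is the outer part of $W$ and $r$ counts its circular runs (and $r(S_W)\le N$ holds automatically, since $2N+1$ points on a circle admit at most $N$ runs). I would prove this lemma in two cases. If $S_W$ surrounds $o$, then no semicircle is free of $S_W$, so every gap between consecutive $S_W$-arcs is sub-semicircular and is sliced off by one facet; the intersection of these $r(S_W)$ half-planes contains $o$ and exactly $S_W$. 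If instead $S_W$ lies in a semicircle, I would use one facet to separate the bulk of the excluded points on the far side (keeping $o$ and $S_W$ on the near side), together with an internal slice for each of the $r(S_W)-1$ interior gaps, again totalling $r(S_W)\le N$ facets.

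The main obstacle is the facet-budget bookkeeping inside this lemma, not the reduction: I must verify that in the surrounding case every gap really is sub-semicircular (so that a single chord both excludes the gap and retains the centre), and I must keep the resulting polytope within budget even when $r(S_W)=N$, including the question of boundedness — the surrounding case produces a bounded $N$-gon automatically, whereas the clustered case may require the convention that unbounded convex $N$-facet regions are admitted, or a spare facet placed far away when $r(S_W)<N$. Finally, the hypothesis $N\ge 2$ is genuinely needed: at $N=1$ the class is the half-planes, whose VC dimension is $3=2N+1$ rather than $2N+2$, and indeed the construction's obstruction surfaces exactly there.
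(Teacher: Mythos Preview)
Your proposal is correct and follows essentially the same construction as the paper's sketch of Tak\'acs's argument: $2N+1$ points equally spaced on a circle together with the centre, with the interior label chosen to agree with the centre so that the excluded circle-points form at most $N$ cyclic runs, each cut off by a single facet. Your run-counting and your surrounding/clustered case split make explicit exactly what the paper only sketches (including the ``more than half-way round the circle'' case it flags for separate treatment); the one minor slip is the claim that the surrounding case automatically yields a \emph{bounded} $N$-gon---with $r(S_W)\le 2$ it need not---but since $G(2,N)$ admits unbounded regions this is harmless.
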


We give a sketch of the proof given in \cite{Takacs} in the appendices.  By a more complicated argument, Tak{\'a}cs establishes that this lower bound is also an upper bound for the VC dimension of polytope classifiers.  In general, upper bounds for polytope classifiers are of no relevance to the more general set of NN-rule classifiers.  But the $N=2$ case is an exception, and we make the following brief side remark about this case:

\begin{remark}
With three prototypes, the only decision surfaces an NN-rule classifier can form are an open 2-gon, or a pair of parallel lines.  But any finite set of points which can be correctly dichotomised by a pair of parallel lines can also be correctly split by a digon formed by a suitably small adjustment of the lines such that they are non-parallel.  So for this case, the set of NN-rule classifiers has no greater separating power than the related set of polygon classifiers.  Thus a precise value for the VC dimension of this set of NN-rule classifiers is established:

 \begin{equation}
 \mathrm{VCdim}(\mathrm{1NN}(2,3)) = 6.
 \end{equation}

\end{remark}
 
Now, for general $m$, Lemma \ref{Takacs1}, with Corollary \ref{impliesLowerBound}, implies a lower bound for $\mathrm{VCdim}(\mathrm{1NN}(2,m))$:

\begin{equation}
 \mathrm{VCdim}(\mathrm{1NN}(2,m)) \geq 2m
\end{equation}

However, we can do better than this.  For $m \geq 4$, the NN-rule classifier can create a larger class of decision surfaces than a single convex polytope. In Appendix~\ref{appendix1} we present a new argument, inspired by the elementary geometrical approach of Tak{\'a}cs but considerably more involved, demonstrating a stronger lower bound for $\mathrm{VCdim}(\mathrm{1NN}(2,m))$, $m \geq 4$:

\begin{proposition}
\label{OneBetter}
\begin{equation}
 \mathrm{VCdim}(\mathrm{1NN}(2,m)) \geq 2m + 1.
\end{equation}
\end{proposition}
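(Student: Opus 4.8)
The plan is to establish the lower bound directly from the definition: I will exhibit a single configuration $S$ of $2m+1$ points in $\mathbb{R}^2$ and show that every one of the $2^{2m+1}$ dichotomies of $S$ is realised by some classifier in $\mathrm{1NN}(2,m)$. Since Lemma \ref{Takacs1} together with Corollary \ref{impliesLowerBound} already yields $\mathrm{VCdim}(\mathrm{1NN}(2,m)) \geq 2m$ through the reflection construction of Lemma \ref{subset} (one interior prototype and $m-1$ reflected exterior prototypes realising a convex $(m-1)$-facet polytope as the decision boundary), the whole difficulty is to gain a single extra point. The guiding idea, and the reason the improvement appears only for $m \geq 4$, is that the decision region of an NN classifier need not be convex: a union of two or more Voronoi cells of the same label can have a reflex vertex, and such a non-convex region can correctly separate dichotomies that no single convex polytope can.

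First I would fix the configuration. I would place the $2m+1$ points so that they are \emph{not} in convex position: most of them would sit on a convex arc, reproducing the Tak{\'a}cs-type behaviour, while a small cluster would be arranged to create a ``pocket'' that a reflex boundary can enter. The configuration must be chosen so that, for every dichotomy, at least one of the two classes admits a separating region of the controlled combinatorial type that $m$ prototypes can build. I expect the free parameters (radii and angular positions) to be pinned down only at the end, once the case analysis has dictated the inequalities they must satisfy.

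Next I would realise the dichotomies by cases on the combinatorial type of the labelling, concretely on the number of maximal same-label runs the two classes induce along the boundary of the configuration. For the \emph{benign} dichotomies, in which one class is cut off by a convex region with at most $m-1$ facets, I would invoke the reflection primitive of Lemma \ref{subset} verbatim, spending one prototype on an interior point and the rest on its reflections. For the \emph{critical} dichotomies, which provably exceed the reach of any single convex polytope (the balanced alternating labelling being the extreme case), I would split the prototypes of one class into two groups whose Voronoi cells meet at a reflex vertex, so that their union is a non-convex region threading through the pocket, with the complementary class receiving the remaining prototypes. In each case I would verify, by writing down the perpendicular bisectors explicitly, that every point of $S$ lands in a cell of the correct label and, crucially, that the total number of prototypes used never exceeds $m$.

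The hard part will be the critical dichotomies together with the demand that the construction be uniform in $m$. Producing a non-convex region that gains exactly one point over the convex optimum is delicate: the reflex vertex must be positioned to exclude (or include) precisely the one extra point without disturbing the classification of the others, and the perpendicular-bisector geometry that creates that vertex must stay valid for every $m \geq 4$ at once. I anticipate handling general $m$ either by a parametrised family with explicit coordinates, or by an inductive step that adjoins one prototype and two points to a verified smaller configuration while preserving the reflex feature; in either case the real effort is the case-by-case confirmation that the budget of $m$ prototypes is met exactly, which is where the argument becomes considerably more involved than the purely convex estimate of Tak{\'a}cs.
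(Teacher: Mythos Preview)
Your high-level strategy---exhibit a single configuration of $2m+1$ points, handle the ``benign'' labellings by Tak\'acs' convex-polygon argument, and use genuinely non-convex $1$NN regions for the remaining ``critical'' labellings---is exactly the paper's approach, and you correctly identify that the gain over $2m$ comes from non-convexity and only for $m\ge 4$.  Where you diverge is in the concrete geometric device and, more importantly, in that the decisive work is left undone: the configuration is unspecified, the critical dichotomies are not identified, and the ``reflex vertex'' construction is not carried out or shown to fit the prototype budget.

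The paper's construction is quite different from your reflex-vertex idea.  The configuration is explicit: the $2m-1$ vertices of a regular $(2m-1)$-gon together with two points very close to the centre, placed symmetrically on a line perpendicular to a fixed diameter and inside the inner polygon cut out by the longest diagonals.  If the two centre points receive the same label, Tak\'acs' $(m-1)$-gon argument applies unchanged.  The only hard case is when the two centre points receive different labels.  The key primitive here is not a reflex vertex but a \emph{strip}: three collinear prototypes (white--black--white) produce two parallel decision lines whose black region is the strip between them.  A short geometric lemma about longest diagonals shows that, for any choice of a black vertex $D\neq C$, there is a non-adjacent vertex $D_i$ such that $\{D,D_i,B\}$ (with $B$ the black centre point) can be isolated by such a strip of width less than two thirds of the circumradius; this width bound guarantees all three strip prototypes lie inside the circumcircle.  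The remaining (at most $m-3$) black vertices, or adjacent black pairs, are then cut off one line at a time by reflecting the nearer white prototype across a suitable chord.  A short pigeonhole argument on adjacency handles the worst case of $m-1$ black vertices with no two adjacent, and the case of fewer than $m$ black points is easier (a two-point strip suffices).  In every branch the total is exactly $m$ prototypes.

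So your plan is on the right track, but the paper's proof turns on a specific, somewhat delicate device (the strip plus the longest-diagonal lemma and the width estimate) that your reflex-vertex sketch does not supply; absent a worked-out construction, your proposal does not yet establish the bound.
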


Though this improvement is the smallest possible, it establishes the principle that the VC dimension of 1NN classifiers with $m$ prototypes in $\mathbb{R}^2$ is larger than that of the relevant comparable class of polygon decision boundaries (recall that Tak{\'a}cs' lower bound, Lemma \ref{Takacs1}, is also an upper bound for that class).  That is, Proposition \ref{OneBetter} establishes that 1NN classifiers in $\mathbb{R}^2$ gain in expressivity from their ability (for $m \geq 4$) to form boundaries other than convex polygons.

\section{Lower bound -- higher dimensions}
\label{lowerdd}
Tak{\'a}cs' result was extended by Tak{\'a}cs and Pataki to Euclidean spaces of dimension higher than two:

\begin{proposition}
 (\cite{TakacsPataki})
 \label{Takacs2}
 \begin{equation}
 \mathrm{VCdim}(G(d,N)) \geq dN + 2,
 \end{equation}
 for $d \geq 2$, $N \geq 2$.
\end{proposition}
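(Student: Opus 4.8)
The plan is to prove the bound by exhibiting, for every $d \geq 2$, a set of $dN+2$ points in $\mathbb{R}^d$ that is shattered by $G(d,N)$; since the VC dimension is the size of the largest shattered set, this gives $\mathrm{VCdim}(G(d,N)) \geq dN+2$ directly. I would argue by induction on $d$, taking the planar estimate $h(G(2,N)) \geq 2N+2$ of Lemma \ref{Takacs1} as the base case, and then showing that a configuration shattered in $\mathbb{R}^d$ can be lifted to one shattered in $\mathbb{R}^{d+1}$ by adjoining exactly $N$ new points. Each added dimension then increases the bound by $N$, so the additive constant $2$ is simply inherited from the two-dimensional base case and the leading term grows as $dN$, giving the target $dN+2$.

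Before the induction it helps to restate shattering in a dual, half-space-covering form. A convex $N$-faceted polytope is an intersection of $N$ half-spaces $\bigcap_{i=1}^{N} H_i$, and a classifier in $G(d,N)$ labels its interior one class and its exterior the other. Hence a given labelling of a finite point set is realisable exactly when one of the two classes can be taken as the interior: every point of that class must lie in all of the $H_i$, while each point of the other class must be excluded by (lie outside) at least one $H_i$. The freedom to use either class as the interior is the familiar label flip, and I would use it throughout to enclose whichever class is more convenient.

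For the inductive step, embed a shattered set $S \subset \mathbb{R}^d$ with $|S| = dN+2$ in the hyperplane $\Pi = \{x_{d+1}=0\} \subset \mathbb{R}^{d+1}$, arranging by a harmless perturbation that, for each of the finitely many dichotomies of $S$, some realising polytope has every point of $S$ strictly on one side of each of its facet hyperplanes with a positive margin. I would then add $N$ points $q_1,\dots,q_N$ at large heights $x_{d+1} \gg 0$ and widely separated horizontal positions, pairing $q_j$ with facet $j$. Given a dichotomy of $S' = S \cup \{q_1,\dots,q_N\}$, I first realise its restriction to $S$ by a vertical prism $P = \bigcap_i H_i$ whose facet hyperplanes contain the $x_{d+1}$-direction and meet $\Pi$ in the facet hyperplanes of the $d$-dimensional polytope; then $P \cap \Pi$ reproduces the correct classification of $S$ and each $q_j$ starts inside $P$. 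To set the label of $q_j$, I would rotate facet $j$ about its trace in $\Pi$ by a small angle: because the points of $S$ lie in $\Pi$ at bounded distance from this axis, their displacement stays within the margin and their classification is unchanged, whereas at the large height of $q_j$ the hyperplane sweeps far enough to move $q_j$ across $\partial H_j$ and hence out of (or into) $P$. Choosing heights and offsets so that the rotation of facet $j$ reaches only its partner $q_j$, the $N$ new labels can be set independently and in every combination, which yields shattering of $S'$.

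The main obstacle is precisely this non-interference: I must guarantee that rotating facet $j$ to fix the membership of $q_j$ neither disturbs the already-correct classification of $S$ nor alters the membership of any other $q_k$, and that a single fixed point configuration works simultaneously for all $2^{|S'|}$ dichotomies. The natural remedy is a quantitative separation of scales — if $q_j$ sits at height $h_j$ and interior-margin $\delta_j$, a toggling angle of order $\delta_j/h_j$ suffices, and one then needs the heights and margins chosen so that this angle leaves $S$ within its margin and leaves every $q_k$ ($k \neq j$) robustly inside $H_j$ — but converting this into uniform inequalities valid across every dichotomy is the delicate part of the argument. A direct, non-inductive construction in the spirit of Tak{\'a}cs' planar argument — an always-enclosable central structure together with $N$ outlying groups of $d$ affinely independent points, one group assigned to each facet — is also possible, but it confronts the same essential difficulty of placing the $N$ facets without mutual interference.
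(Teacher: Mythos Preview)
The paper does not give a self-contained proof of this proposition: its entire ``proof'' is a pointer to \cite{TakacsPataki}. So there is no detailed argument in the paper to compare against, and any assessment of your plan has to be on its own merits and against what one can reasonably infer the cited construction looks like.

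Your inductive plan has a genuine gap at the heart of the lifting step. You place $q_1,\dots,q_N$ at ``widely separated horizontal positions'' and then speak of ``pairing $q_j$ with facet $j$'' of the prism $P$. But the realising $d$-dimensional polytope, and hence its facets, depends on the dichotomy of $S$: there is no canonical ``facet $j$'' that persists across all $2^{|S|}$ labellings. In particular, the assertion that ``each $q_j$ starts inside $P$'' is not justified: for many dichotomies the cross-section $P\cap\Pi$ will fail to contain the horizontal position of some $q_j$, and once $q_j$ lies on the wrong side of a facet other than the one you intend to tilt, rotating the intended facet cannot bring $q_j$ inside. Conversely, if you instead place all $q_j$ on the vertical line through an always-interior point so that they are guaranteed to start inside $P$, then any single tilted facet sweeps past \emph{all} of them in height order, and you lose the ability to set their labels independently. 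Your own ``non-interference'' caveat is exactly this obstruction, and the proposal does not indicate how to overcome it; the separation-of-scales heuristic you sketch presupposes a stable facet-to-point matching that you have not constructed.

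The alternative you mention at the end --- a direct construction with an always-enclosable central configuration together with $N$ outlying clusters of $d$ points, one cluster per facet --- is much closer in spirit to the Tak\'acs--Pataki argument the paper cites, and to how the planar case in Appendix~\ref{TakacsAppendix} is organised. There each facet is \emph{assigned} a well-localised group of exterior-labelled points to cut off, and the convex-position geometry guarantees the facets do not interfere. The higher-dimensional version needs the same two ingredients: (i) a placement of the $N$ clusters so that a hyperplane tuned to realise any dichotomy of one cluster leaves all other clusters and the central configuration on the interior side, and (ii) an accounting for the extra ``$+2$'' via the central configuration. If you want a route that actually closes, I would pursue that direct construction rather than the induction.
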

\begin{proof}
 The geometrical arguments are less simple than for the two-dimensional case; we refer readers to \cite{TakacsPataki} for the proof.
\end{proof}
\begin{remark}
 Tak{\'a}cs and Pataki also offer slightly stronger lower bounds for the special cases $d=3$ and $d=4$: $h(G(3,N)) \geq 3N + 3$ and $h(G(4,N)) \geq 4N + 5$; but these require a higher minimum value for $N$ than Lemma \ref{Takacs2} does.
\end{remark}

As in the two-dimensional case, we can deduce a lower bound for the VC dimension of the NN classifier:
\begin{corollary}
  \label{LowerBoundProp}
  \begin{equation}
  \mathrm{VCdim}(\mathrm{1NN}(d,m)) \geq dm + 2 - d,
  \end{equation}
  for $d \geq 2$, $m \geq 3$.
\end{corollary}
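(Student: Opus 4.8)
The plan is to obtain this bound as a direct consequence of the subset relation established earlier together with the Tak{\'a}cs--Pataki bound, via a single change of index. There is no new geometry to do here: the work has already been done in Lemma \ref{subset} (equivalently Corollary \ref{impliesLowerBound}) and in Proposition \ref{Takacs2}, and all that remains is to combine them and reindex correctly.

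First I would invoke Corollary \ref{impliesLowerBound}. Since $G(d,N) \subseteq \mathrm{1NN}(d,N+1)$, every set of points shattered by a convex $N$-faceted polytope classifier is shattered by the same classifier viewed as an element of $\mathrm{1NN}(d,N+1)$; hence any lower bound on $\mathrm{VCdim}(G(d,N))$ is also a lower bound on $\mathrm{VCdim}(\mathrm{1NN}(d,N+1))$. Applying the Tak{\'a}cs--Pataki bound $\mathrm{VCdim}(G(d,N)) \geq dN + 2$ then gives
\begin{equation}
\mathrm{VCdim}(\mathrm{1NN}(d,N+1)) \geq dN + 2,
\end{equation}
valid for $d \geq 2$ and $N \geq 2$.

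Second I would change variables, setting $m = N+1$, so that $N = m-1$. Substituting yields
\begin{equation}
\mathrm{VCdim}(\mathrm{1NN}(d,m)) \geq d(m-1) + 2 = dm + 2 - d,
\end{equation}
which is the claimed bound. The only point requiring care is the range of validity: the constraint $N \geq 2$ becomes $m-1 \geq 2$, i.e.\ $m \geq 3$, while the constraint $d \geq 2$ is carried through unchanged. This matches the hypotheses $d \geq 2$, $m \geq 3$ in the statement.

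I do not expect any genuine obstacle, since the argument is purely a reindexing of an established result through the inclusion of hypothesis classes. The one thing I would be vigilant about is precisely this bookkeeping of the index shift and its effect on the admissible values of $m$; miscounting there is the only way to go wrong.
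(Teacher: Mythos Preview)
Your proposal is correct and matches the paper's own (implicit) argument: the corollary is obtained exactly by combining Corollary~\ref{impliesLowerBound} with Proposition~\ref{Takacs2} and substituting $m=N+1$, with the parameter ranges carried over as you note. There is nothing to add.
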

\begin{remark}
The relation does not hold for $d=2$, $m=2$: an NN-rule classifier with two prototypes in the plane has for its decision boundary a line, so cannot shatter four points: linear classifiers are, famously, unable to solve the XOR problem.
\end{remark}

\section{Upper bounds}
\label{upper}

The \emph{shatter coefficient} of a family of sets $B$ (for our purposes, $B$ is a set of subsets of $\mathbb{R}^d$) is a number closely related to VC dimension.  It is called the ``growth function'' by some authors, but other authors give that term a different definition.  The $n$th shatter coefficient of $B$, denoted $S(B,n)$, is the maximum number of different subsets of $n$ points that can be formed by intersection of the $n$ points with elements of $B$: that is, the number of subsets that can be ``picked out'' using elements of $B$.  (The maximum is taken over all sets of $n$ points.)

The VC dimension of $B$ is then the largest $n$ such that $S(B,n) = 2^n$.  This is an expression of the concept of shattering for families of sets $B$ rather than classifiers: if the set of all subsets of the $n$ points which can be formed by intersection of the $n$ points with elements of $B$ is all $2^n$ possible subsets of the $n$ points, then the $n$ points are shattered by $B$.

The VC dimension of a family of classifiers, as we defined it in section \ref{introduction}, is equal to the VC dimension of the associated family of decision regions, as just defined, and the shatter coefficient of a family of classifiers is defined equal to the shatter coefficient of the associated family of decision regions see \citep[see][ch.12]{Devroye96}.  

Devroye et al. give upper bounds for the shatter coefficients of the class $C(d,m)$:
\begin{lemma}
 (Devroye, Gy{\"o}rfi, and Lugosi) For $m \geq 3$,
 \label{DevroyeLemma}
 \begin{align}
  S(C,n) &\leq 2^m n^{9(m-2)} &\text{for } d=2\\
  S(C,n) &\leq 2^m n^{(d+1)m(m-1)/2} &\text{for } d \geq 3.
 \end{align}
\end{lemma}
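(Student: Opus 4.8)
The plan is to view every 1NN decision region as a Boolean combination of a modest number of half-spaces and then to apply the standard composition bound for shatter coefficients. The two ingredients I would assemble first are: (i) the shatter coefficient of the class $\mathcal{H}$ of (closed) half-spaces in $\mathbb{R}^d$, which has VC dimension $d+1$ and therefore satisfies $S(\mathcal{H},n)\le n^{d+1}$ for $n$ beyond a small threshold; and (ii) the composition lemma, namely that if a set is obtained by applying one \emph{fixed} Boolean function to $k$ sets drawn respectively from classes $\mathcal{H}_1,\dots,\mathcal{H}_k$, then the resulting class $\mathcal{B}$ satisfies $S(\mathcal{B},n)\le\prod_{i=1}^{k}S(\mathcal{H}_i,n)$, since the trace of $\mathcal{B}$ on $n$ points is a function of the joint trace of the $k$ generating sets (and complementing a generating class leaves its shatter coefficient unchanged).

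Next I would write the region explicitly. Fixing the $n$ witnessing points, a 1NN classifier with prototypes $p_1,\dots,p_m$ has positive region $\bigcup_{i\in P}V_i$, where $P$ is the set of positively labelled prototypes and the Voronoi cell $V_i=\bigcap_{j\ne i}\{x:\|x-p_i\|\le\|x-p_j\|\}$ is an intersection of half-spaces. The step that pins down the exponent is the observation that the half-space for the ordered pair $(i,j)$ is the complement of the one for $(j,i)$, so the whole region is a fixed Boolean combination of only $\binom{m}{2}=m(m-1)/2$ \emph{distinct} half-spaces, not $m(m-1)$. Each of the at most $2^m$ label assignments fixes one such Boolean function, so the composition lemma gives, for $d\ge 3$,
\[
 S(C,n)\;\le\;2^m\,S(\mathcal{H},n)^{\binom{m}{2}}\;\le\;2^m\,n^{(d+1)m(m-1)/2},
\]
as required.

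For $d=2$ I would sharpen the count of distinct generating half-spaces. A planar Voronoi diagram on $m$ sites is a planar graph, so Euler's formula bounds its number of edges by $3m-6=3(m-2)$; as each edge lies on exactly one perpendicular bisector, at most $3(m-2)$ of the $\binom{m}{2}$ bisectors actually bound a cell, and the positive region is a Boolean combination of at most $3(m-2)$ half-planes. Because half-planes have VC dimension $3$, the composition lemma now produces the exponent $3\cdot 3(m-2)=9(m-2)$, i.e.\ $S(C,n)\le 2^m\,n^{9(m-2)}$. This also explains the dichotomy in the statement: for $d\ge 3$ the combinatorial complexity of the diagram no longer beats the trivial count $\binom{m}{2}$ of bisectors, so no analogous saving is available.

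I expect the main obstacle to be the legitimacy of the $d=2$ reduction: the composition lemma requires a \emph{fixed} family of generating half-planes, whereas \emph{which} $3(m-2)$ bisectors are active depends on the configuration of the prototypes. I would resolve this by summing over the combinatorial types of Voronoi diagram on $m$ labelled sites -- a count that depends on $m$ alone and not on $n$. Each type contributes a Boolean combination of one fixed family of at most $3(m-2)$ half-planes, hence at most $n^{9(m-2)}$ traces, so the type count is absorbed into the leading ($n$-independent) factor and leaves the exponent of $n$ untouched. The remaining routine points are the clean polynomial bound $S(\mathcal{H},n)\le n^{d+1}$ coming from the VC dimension $d+1$ of half-spaces, and checking the small values of $n$ for which the displayed inequalities are asserted to hold.
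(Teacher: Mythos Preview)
The paper does not give its own proof of this lemma; it simply cites \cite[p.~312]{Devroye96} and adds one sentence of explanation: bound the number of Voronoi cell boundaries by $m(m-1)/2$ hyperplanes in general, and for $d=2$ use the planar-graph edge bound applied to the Delaunay triangulation. Your proposal follows exactly this outline, and the $d\ge 3$ argument is the standard one and is correct.

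For $d=2$, though, the obstacle you flag is genuine and your proposed resolution does not recover the stated bound. Summing over the combinatorial types of the Voronoi/Delaunay diagram on $m$ labelled sites introduces a multiplicative factor $T(m)$ that is \emph{not} dominated by $2^m$: the number of order types of $m$ labelled points in the plane is $m^{\Theta(m)}$, so your argument yields only
\[
S(C,n)\;\le\;2^m\,T(m)\,n^{9(m-2)}\;=\;m^{O(m)}\,n^{9(m-2)},
\]
not $2^m n^{9(m-2)}$. Saying the type count is ``absorbed into the leading $n$-independent factor'' is fine if all you care about is the exponent of $n$ --- and indeed that is all the present paper uses downstream, so the $O(m\log m)$ VC-dimension conclusion would survive --- but it does not prove the lemma as stated. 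To get the constant $2^m$ you need either a different bookkeeping that avoids enumerating Delaunay types, or to consult the original argument in Devroye--Gy{\"o}rfi--Lugosi for how they control the dependence of the active bisectors on the prototype configuration.
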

\begin{proof}
 See \cite[p. 312]{Devroye96}
\end{proof}
As before, $d$ is the dimension of the space and $m$ is the number of prototypes used by the classifier.  

These upper bounds are based simply on the observation that for a reference set with $m$ prototypes there are at most $m(m-1)/2$ Voronoi cell boundaries, so the number of points which can be shattered by the set of Voronoi diagrams with $m$ centres is bounded above by the number of points which can be shattered by $m(m-1)/2$ hyperplanes.  The stronger result for $d=2$ comes from a restriction on the number of edges of a planar graph, applied to the Delaunay triangulation which is the dual of the Voronoi diagram.

These bounds imply the following result for the VC dimension of the NN classifier:

\begin{proposition}
\label{Wbound}
For $m \geq 3$,
\begin{equation}
 \mathrm{VCdim}(\mathrm{1NN}(d,m)) \leq -\frac{q}{\log{2}} W_{-1} \left(-\frac{\log 2}{q} 2^{-\frac{m}{q}} \right),
 \end{equation}
where 
\begin{align}
  q &= 9(m-2) &\text{for } d = 2\\
  q &= (d+1)m(m-1)/2 &\text{for } d \geq 3.
\end{align}
$W$ is the Lambert W function; the $W_{-1}$ branch is the relevant branch. The logarithms are natural logarithms.
\end{proposition}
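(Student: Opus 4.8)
The plan is to convert the polynomial-times-exponential bound of Lemma \ref{DevroyeLemma} into a transcendental inequality on the number of shatterable points, and then to invert that inequality in closed form using the Lambert $W$ function. Writing $q$ for the relevant exponent (the $d=2$ and $d \geq 3$ cases are handled identically once $q$ is fixed), I would begin from the defining property of the VC dimension: if a set of $n$ points is shattered, then the shatter coefficient equals $2^n$. Since the shatter coefficient of a classifier family equals that of its associated family of decision regions, I may combine this equality with the bound $S(C,n) \leq 2^m n^q$ of Lemma \ref{DevroyeLemma} to obtain the necessary condition $2^n \leq 2^m n^q$ for $n$ points to be shatterable. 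Hence $\mathrm{VCdim}(\mathrm{1NN}(d,m))$ is at most the largest real $n$ for which this inequality can hold.

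Next I would analyse $f(n) = n \log 2 - q \log n - m \log 2$, so that the shattering condition reads $f(n) \leq 0$. Since $f'(n) = \log 2 - q/n$ vanishes only at $n = q/\log 2$, with $f \to +\infty$ as $n \to 0^{+}$ and as $n \to \infty$, the solution set $\{ f \leq 0 \}$ is a closed interval $[n_1, n_2]$ (provided the minimum value of $f$ is negative, which I verify below). Every $n$ larger than $n_2$ forces $2^n > 2^m n^q \geq S(C,n)$, so no such $n$ can be shattered; therefore the VC dimension is bounded above by the larger root $n_2$.

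It remains to solve $2^n = 2^m n^q$ in closed form and to extract $n_2$. Taking $q$-th roots and rearranging gives $n\, e^{-(n \log 2)/q} = 2^{-m/q}$; multiplying through by $-(\log 2)/q$ and substituting $w = -(n \log 2)/q$ reduces this to the canonical form $w e^{w} = -\tfrac{\log 2}{q}\, 2^{-m/q}$, whose solutions are $w = W\!\left(-\tfrac{\log 2}{q} 2^{-m/q}\right)$, and hence $n = -\tfrac{q}{\log 2}\, W\!\left(-\tfrac{\log 2}{q} 2^{-m/q}\right)$. The two real roots $n_1 < n_2$ correspond to the two real branches of $W$ on the interval $(-1/e, 0)$: the principal branch $W_0$ returns values in $[-1,0)$ and so yields the smaller root, while $W_{-1}$ returns values in $(-\infty, -1]$ and, because $n = -\tfrac{q}{\log 2}\, w$ is decreasing in $w$, yields the larger root $n_2$. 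Selecting the $W_{-1}$ branch therefore gives the claimed bound.

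The main obstacle I anticipate is the branch bookkeeping: I must confirm that the argument $-\tfrac{\log 2}{q} 2^{-m/q}$ genuinely lies in $(-1/e, 0)$ for all $m \geq 3$ (equivalently, that $f$ attains a negative minimum, so that two real roots exist and the bound is finite), and that it is the $W_{-1}$ root rather than the $W_0$ root that controls the VC dimension. Checking the domain condition amounts to verifying $\tfrac{\log 2}{q} 2^{-m/q} < 1/e$, which holds comfortably for the prescribed values of $q$; a short monotonicity argument in $m$ suffices. Everything else is routine algebra.
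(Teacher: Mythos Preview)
Your proposal is correct and follows essentially the same route as the paper: bound the shatter coefficient by $2^m n^q$ via Lemma~\ref{DevroyeLemma}, reduce to the equation $2^m n^q = 2^n$, invert using the Lambert $W$ function, and select the $W_{-1}$ branch as the larger root. Your version is in fact more careful than the paper's own proof, which omits the convexity/interval argument, the explicit reduction to canonical $we^{w}$ form, and the verification that the argument lies in $(-1/e,0)$.
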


\begin{proof}
 See appendix \ref{LambertResult}.
\end{proof}

We can obtain from this a looser but more easily interpretable upper bound by using a recent result which gives a lower bound for the $W_{-1}$ branch of the Lambert function:
\begin{lemma}
(Chatzigeorgiou)
  \begin{equation}
    W_{-1}(-e^{-u-1}) > -1 - \sqrt{2u} - u,
  \end{equation}
  for $u>0$.
\end{lemma}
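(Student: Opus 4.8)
The plan is to invert the defining relation of the Lambert $W$ function, turning the claimed inequality into an elementary single-variable statement. Writing $w = W_{-1}(-e^{-u-1})$, we have $w e^{w} = -e^{-u-1}$, and since we are on the $W_{-1}$ branch, $w \le -1$. I would therefore substitute $w = -(1+t)$ with $t \ge 0$, which automatically keeps us on the correct branch. Feeding this into $w e^{w} = -e^{-u-1}$ and cancelling the common factor $e^{-1}$ gives $(1+t)e^{-t} = e^{-u}$, that is,
\begin{equation}
u = t - \ln(1+t).
\end{equation}
This is the key relation linking the two variables. Note that $t \mapsto t - \ln(1+t)$ is a strictly increasing bijection of $[0,\infty)$ onto $[0,\infty)$, so reparametrising in terms of $t$ loses nothing and $u>0$ corresponds exactly to $t>0$.

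In these terms the target inequality $w > -1 - \sqrt{2u} - u$ becomes simply $t < u + \sqrt{2u}$. Substituting $u = t - \ln(1+t)$ and cancelling the common $t$ reduces this to
\begin{equation}
\ln(1+t) < \sqrt{2\bigl(t - \ln(1+t)\bigr)}.
\end{equation}
For $t > 0$ both sides are strictly positive, since $0 < \ln(1+t) < t$, so I may square without reversing the inequality, obtaining the equivalent statement
\begin{equation}
\bigl[\ln(1+t)\bigr]^{2} < 2\bigl(t - \ln(1+t)\bigr).
\end{equation}

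The final step is a short calculus argument. I would set $f(t) = 2\bigl(t - \ln(1+t)\bigr) - \bigl[\ln(1+t)\bigr]^{2}$ and verify $f(0)=0$ together with
\begin{equation}
f'(t) = \frac{2}{1+t}\bigl(t - \ln(1+t)\bigr) > 0 \quad \text{for } t>0,
\end{equation}
where positivity again uses $\ln(1+t) < t$. Hence $f(t) > 0$ for all $t>0$, which is precisely the squared inequality, and unwinding the substitutions recovers the claim. I expect the only real subtlety to be the bookkeeping: choosing the branch-consistent substitution $w=-(1+t)$ so that $t\ge 0$ and the logarithm's argument stays positive, and confirming the positivity of both sides before squaring. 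Everything after the reduction to $f$ is entirely elementary.
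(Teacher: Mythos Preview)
Your argument is correct. The substitution $w=-(1+t)$ cleanly parametrises the $W_{-1}$ branch, the reduction to $u=t-\ln(1+t)$ is valid, the squaring step is justified since both sides are positive for $t>0$, and the derivative computation $f'(t)=\tfrac{2}{1+t}\bigl(t-\ln(1+t)\bigr)>0$ with $f(0)=0$ finishes it.

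Note, however, that the paper does not actually give a proof of this lemma: it simply cites \cite{Chatz13} and moves on. So there is no ``paper's own proof'' to compare against here; you have supplied a self-contained argument where the authors were content to quote the result. Your approach is in fact essentially the one Chatzigeorgiou uses in the cited reference, so you have reconstructed the original proof rather than found an alternative route.
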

\begin{proof}
  See \cite{Chatz13}.
\end{proof}

 Using this result with proposition \ref{Wbound} gives the following looser bound on the VC dimension:
\begin{corollary}
\label{LooseCor}
 Let $q' = q/\log{2}$, where $q$ is defined as in proposition \ref{Wbound}. Then for $m \geq 3$,
\begin{equation}
 \label{looseupper}
 \mathrm{VCdim}(\mathrm{1NN}(d,m)) < q' \left( \sqrt{2\left(\frac{m}{q'} + \log{q'} -1\right)} + \frac{m}{q'} + \log{q'} \right).
\end{equation}
\end{corollary}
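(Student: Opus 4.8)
The plan is to derive the bound by a direct substitution, feeding the exact upper bound of Proposition~\ref{Wbound} into Chatzigeorgiou's lower bound for the $W_{-1}$ branch. The only substantive step is an algebraic manipulation that recasts the argument of the Lambert function into the form $-e^{-u-1}$ required by the lemma; the rest is careful bookkeeping with signs.

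First I would rewrite the argument of $W_{-1}$ appearing in Proposition~\ref{Wbound}. Using $q' = q/\log 2$, so that $\log 2 / q = 1/q'$ and $2^{-m/q} = e^{-(m/q)\log 2} = e^{-m/q'}$, the argument becomes
\[
 -\frac{\log 2}{q}\,2^{-m/q} \;=\; -\frac{1}{q'}\,e^{-m/q'}.
\]
Matching this against $-e^{-u-1}$ and taking logarithms of $\frac{1}{q'}e^{-m/q'} = e^{-u-1}$ gives $-\log q' - m/q' = -u - 1$, hence
\[
 u \;=\; \frac{m}{q'} + \log q' - 1.
\]

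Second, before applying the lemma I would verify its hypothesis $u > 0$. This is the one place a check is genuinely needed, and it is immediate: over the admissible range $m \geq 3$ the smallest value of $q$ is $9$ (attained at $d=2$, $m=3$, where $q = 9(m-2)$; for $d \geq 3$ one has $q = (d+1)m(m-1)/2 \geq 12$), so $q' = q/\log 2 \geq 9/\log 2 \approx 12.98 > e$ throughout. Therefore $\log q' > 1$, and since $m/q' > 0$ we conclude $u > 0$, as required.

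Finally I would invoke Chatzigeorgiou's lemma to obtain
\[
 W_{-1}\!\left(-\tfrac{1}{q'}\,e^{-m/q'}\right) \;>\; -1 - \sqrt{2u} - u,
\]
then negate, reversing the inequality, and use $1 + u = m/q' + \log q'$ to write $-W_{-1}(\cdot) < \sqrt{2u} + m/q' + \log q'$. Multiplying through by the positive constant $q'$ and chaining with $\mathrm{VCdim}(\mathrm{1NN}(d,m)) \leq -q'\,W_{-1}(\cdot)$ from Proposition~\ref{Wbound} yields precisely~\eqref{looseupper}. I do not foresee a real obstacle: the argument is an elementary substitution, and the only subtleties are the sign reversal in passing from the lower bound on $W_{-1}$ to the upper bound on $-W_{-1}$, and confirming the lemma's hypothesis $u>0$, both handled above.
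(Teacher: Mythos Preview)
Your proposal is correct and follows exactly the route the paper indicates: it simply says the corollary follows from combining Proposition~\ref{Wbound} with Chatzigeorgiou's lemma, and you have supplied the omitted details (the substitution $u = m/q' + \log q' - 1$, the verification that $u>0$, and the sign-and-scaling bookkeeping) accurately. There is nothing to add or correct.
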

This upper bound enables us to bound the the asymptotic rate of growth of $C$.  Now, $q'$ grows monotonically with $m$ and with $d$, and grows at least as fast as $O(m)$ for increasing $m$.  So whether considering growth with increasing $m$ or growth with increasing $d$, the fastest-growing term within the brackets of (\ref{looseupper}) is $\log{q'}$.  So we have
\begin{equation}
\label{simq}
 \mathrm{VCdim}(\mathrm{1NN}(d,m)) \lesssim q' \log{q'},
\end{equation}
for large $m$ or $d$.  

Table~\ref{tab:results} summarises the VC dimension bounds derived in this study.

\section{Discussion of asymptotic behaviour}
\label{discussion}

Considering first the rate of growth of the VC dimension with $m$ for fixed $d$, equation \ref{simq} implies
\begin{equation}
\label{Om2}
\mathrm{VCdim}(\mathrm{1NN}(D,m)) \in O(m^2 \log(m^2)) = O(m^2 \log(m)),
\end{equation}
for fixed $d=D>2$, with the better result
\begin{equation}
\mathrm{VCdim}(\mathrm{1NN}(2,m)) \in O(m \log(m)),
\end{equation}
for $d = 2$. Figure~\ref{fig:VCBoundsFigure} illustrates the upper bounds as a function of $m$, for the cases $d = 2$ and $d = 3$.

\begin{figure}[htb]
\centering
\includegraphics[width=0.65\linewidth]{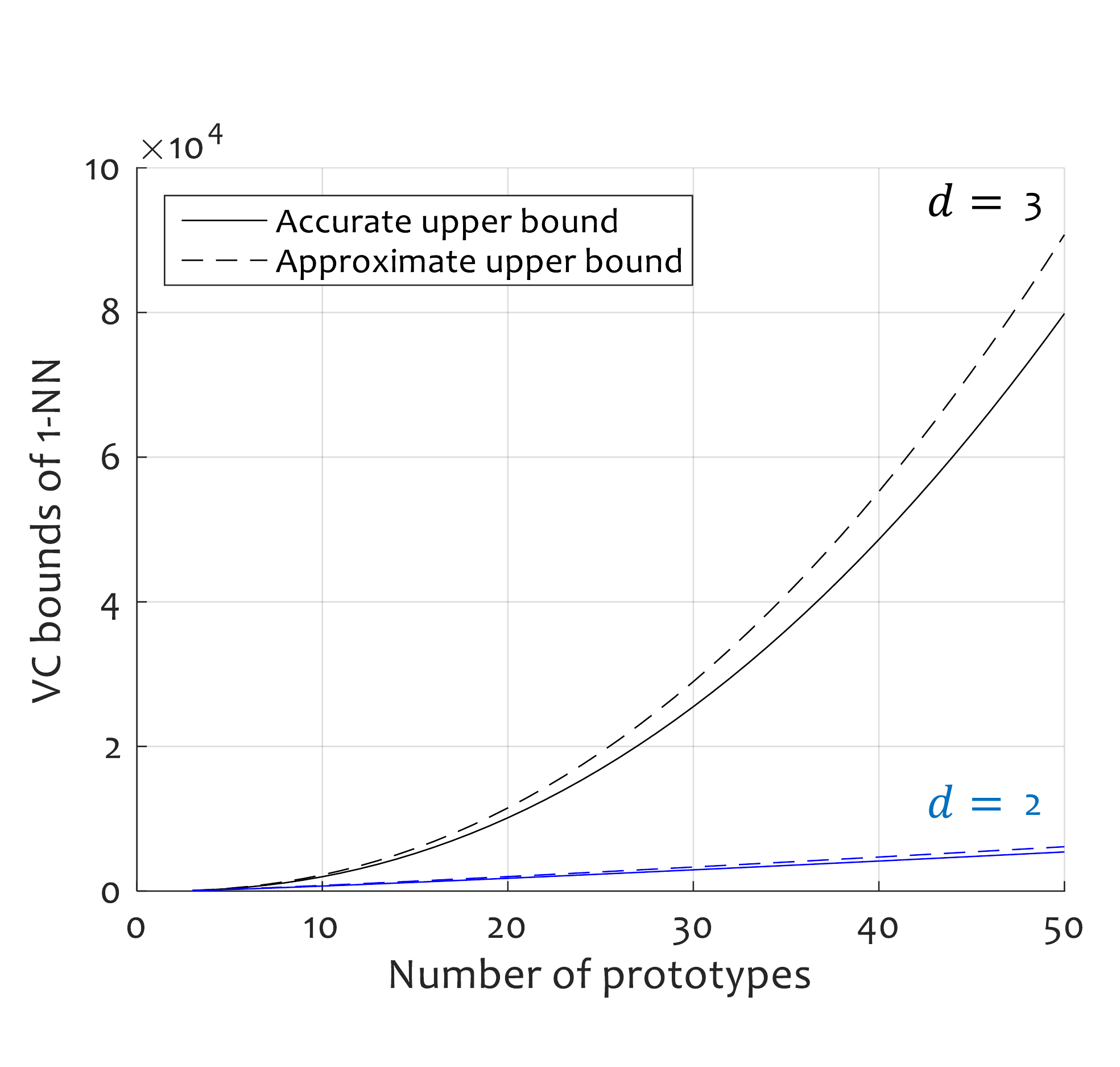}
\caption{Upper bounds for the VC dimension of NN-rule classifiers.  ``Accurate'' upper bounds are the best we have obtained, given in Proposition~\ref{Wbound}. ``Approximate'' upper bounds are the looser bounds given in Corollary~\ref{looseupper}.}
\label{fig:VCBoundsFigure}
\end{figure}

It is interesting to compare the log-linear growth in $m^2$ given by equation (\ref{Om2}) with recent results for neural networks given by \cite{Harvey}.  Consider a neural network with $d$ input neurons (the real coordinates of the feature space), and $N$ neurons in a single hidden layer with binary threshold activation functions. Each neuron in the hidden layer of such a network encodes a hyperplane decision boundary: the neuron will be in one or the other of its binary states depending on which side the input vector lies of a plane normal to the vector of weights of that neuron.  So a network with $N=\frac{1}{2}m(m-1)$ hidden neurons is comparable to 1NN classifiers with $m$ prototypes, in the sense that both build their decision boundaries from parts of $\frac{1}{2}m(m-1)$ hyperplanes.  \cite{Harvey} prove that a piecewise-linear neural network with $W$ parameters has VC dimension with asymptotic growth $O(W \log W)$ in the number of parameters.  For the network just described, the number of parameters is $ \sim Nd = \frac{1}{2}m(m-1)d$, meaning the asymptotic growth of its VC dimension is $O(m^2d \log m^2d)$, just as relation (\ref{simq}) gives as an upper bound for our 1NN classifiers.  That is, the neural network achieves the (asymptotically) highest value it can for its VC dimension, given the number of hyperplane decision surfaces it has to work with.  It is an interesting open question whether the same is true for 1NN classifiers.

Turning now to consider the rate of growth with $d$ for fixed $m=M$, equation (\ref{simq}) implies
\begin{equation}
 \mathrm{VCdim}(\mathrm{1NN}(d,M)) \in O(d \log(d)).
\end{equation}
Thus, the VC dimension grows polynomially in $d$ (asymptotically slower than $d^2$).  This has implications for learnability: for example, polynomial growth of the VC dimension of a class with the dimension of the space is a necessary condition for the class to be properly polynomially learnable~\cite[Theorem 3.1.1.]{Blumer89}.

\section{Conclusions}
\label{conclusions}
\begin{table}
\caption{Summary of the VC dimension bounds. $m$ denotes the number of prototypes in the reference set, and $d$ denotes dimensionality. $W_{-1}$ denotes the $-1$ branch of the Lambert W function.}
\label{tab:results}
\centering

\begin{tabular}{ccc}
Type of bound&Dimensionality&Expression\\
\hline
Lower&$d = 2$&$2m + 1$\\
Lower&$d \geq 2$&$dm + 2 - d$\\
Upper&$d = 2$&$-\frac{9(m-2)}{\log{2}} W_{-1} \left(-\frac{\log 2}{9(m-2)} 2^{-\frac{m}{9(m-2)}} \right)$\\
Upper&$d \geq 2$&$-\frac{(d+1)m(m-1)}{2\log{2}} W_{-1} \left(-\frac{2\log 2}{(d+1)m(m-1)} 2^{-\frac{2m}{(d+1)m(m-1)}} \right)$
\end{tabular}

\end{table}

The VC dimension for the set of all NN-rule classifiers in $d$-dimensional Euclidean space with a reference set of size $m$ grows at least as fast as $dm$ and not faster than $O(m^2 \log{m}$) as $m$ increases.  For the case of two-dimensional Euclidean space, the VC dimension grows not faster than $O(m \log{m}$).

Considering instead growth with $d$, the VC dimension for the set of all NN-rule classifiers in $d$-dimensional Euclidean space with a reference set of size $m$ grows at least as fast as $d(m-1)$ and not faster than $O(d \log{d}$) as $d$ increases. 

Precise lower and upper bounds for this VC dimension are given in our Corollary \ref{LowerBoundProp} and Proposition \ref{Wbound} respectively, and summarised in Table~\ref{tab:results}.

The consequence of these bounds that is of interest to practitioners is the implication for the size of sample needed to learn an accurate classifier.  In the Probably Approximately Correct learning model, the \emph{sample complexity} is the number of training examples needed to learn a classifier of given accuracy with given probability.  The sample complexity of a family of classifiers is known to depend linearly on the VC dimension~\cite[Theorem 6.8]{Shais}.  Therefore, the bounds we give above for the asymptotic growth of the VC dimension are also bounds on the asymptotic growth of the size of the training data set needed to learn (with given probability) an accurate NN classifier with reference set of given size.  The lower bound applies only to classification algorithms which produce reference sets of given fixed size (and can produce \emph{any} reference set of that size). The upper bound is significantly more broadly applicable. The upper bound applies to any NN classification algorithm which may not have more than $m$ points in its reference set.  

Thus we may conclude: the size of the training set needed to learn an accurate NN-rule classifier with reference set of size $m$ in $d$-dimensional Euclidean space grows not faster than $O(m^2 \log{m}$) as $m$ increases.  For the case of two-dimensional Euclidean space, the size of the training set required grows not faster than $O(m \log{m}$).  Considering instead growth with $d$ for fixed $m$, the size of the training set required grows not faster than $O(d \log{d}$).

The fact that the growth rate of the upper bound is asymptotically faster (with $m$) for $d \geq 3$ than for $d=2$ raises the interesting possibility that there may be something fundamentally different about the behaviour of NN-rule classifiers in 3 dimensions and higher from their behaviour in two-dimensional space.  If future work were to establish an $O(m^2)$ lower bound for $d \geq 3$, this would be confirmed. If, instead, an $O(m \log(m))$ upper bound were established, implying the same behaviour for the 1NN classifier in higher dimensions as in 2 dimensions, this would imply instead an interesting discrepancy between the behaviour of 1NN classifiers and the behaviour of neural networks with access to an equal number of hyperplanes from which to construct their decision boundaries, as discussed in section \ref{discussion}.

\section*{Acknowledgment}
This work was done under project RPG-2015-188 funded by The Leverhulme Trust, UK.  While preparing the paper for publication, IG received support from the European Union's Horizon 2020 research and Innovation programme under grant agreement No 731593.

\bibliographystyle{natbib}

\appendix
\section{Proof of Lemma \ref{subset}}
\begin{proof}
 We will show by construction that for an arbitrary convex polytope $P$ with $N$ facets, there is a labelled set of $N+1$ points such that an NN classifier using this set as a reference set will have $P$ for its decision boundary.   

 The reference set is constructed as follows: place one point, with the label of the interior region, anywhere in the interior of the polytope $P$.  The remaining $N$ points, with the opposite label, are placed as the reflections of this point in the $N$ hyperplanes which contain the $N$ facets of $P$.  The $i$th hyperplane is therefore the locus of points equidistant from the interior point and the $i$th exterior point. As the polytope is convex, no part of the interior of the polytope lies on the side of the $i$th hyperplane closer to the exterior point (in particular, the placement of the $i$th exterior point does not impact any boundary facet other than that formed by the $i$th hyperplane). Conversely, if a point is on the interior side of all $N$ hyperplanes, then it is in the interior of the polytope. 
 
 Thus, by construction, the set of points closer to the interior point than to any exterior point is the intersection of the $N$ half-spaces which are closer to the interior point than to the respective exterior points.  That is, the Voronoi cell of the interior point is the polytope $P$: the decision boundary using the NN classifier is $P$.
 
 Each convex $N$-faceted $d$-tope classifier is therefore a NN classifier with a reference set of size $N+1$.  The set of all such polytope classifiers is therefore a subset of the set of all such NN classifiers.
\end{proof}

\section{Proof of Lemma \ref{Takacs1}}
\label{TakacsAppendix}
\begin{proof}
 To prove that the VC dimension of the $N$-gon classifier is at least $2N+2$, we must show that there exists an arrangement of $2N+2$ points which can be correctly labelled using an $N$-gon decision boundary, for any possible partitioning of the points into two classes (i.e., the $2N+2$ points can be shattered).
 
 The $2N+2$ points to be labelled are arranged as follows: place $2N + 1$ points on a circle, with the final point in the centre.  Then an $N$-gon can be constructed to include any subset of the points on the circle, together with the centre point, excluding the other points. The worst case is that, going round the circle, there are $N$ separate groups of points (of size one or two) of the opposite label to the centre point; the $N$ edges of the $N$-gon may be arranged to exclude one of these $N$ groups each.  It is always possible to place a line separating a sequence of points around the circle from the rest of the points on the circle, as the points on the circle are in convex position.
 
 The case where a sequence of points of opposite label to the centre point extends more than half-way round the circle, as in Figure~\ref{fig:Takacs}(a), must be considered separately but presents no difficulties.
\end{proof}

\begin{figure}[htb]
\centering
\includegraphics[width=0.9\linewidth]{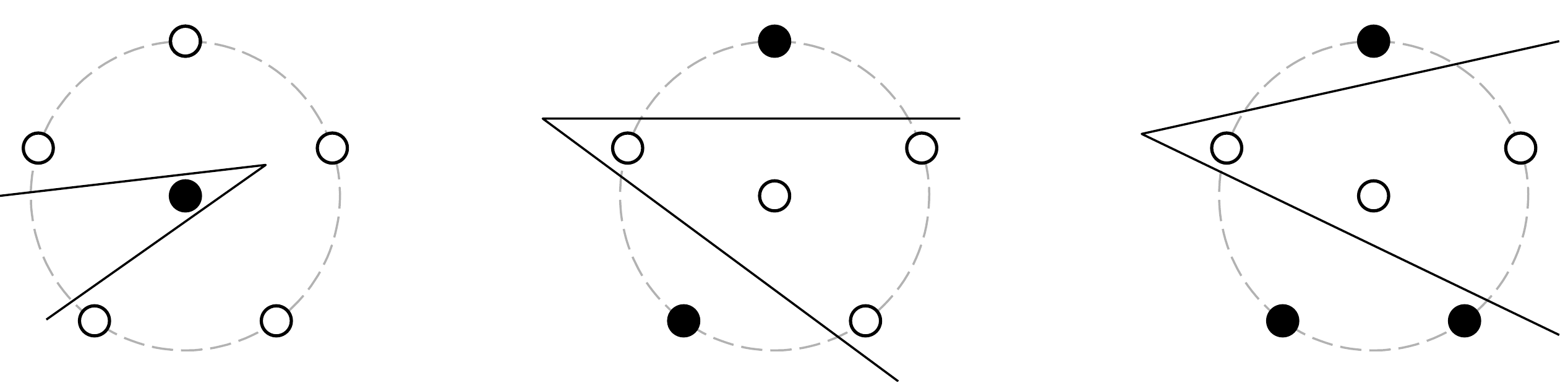}
(a)\rule{2.7 cm}{0 cm}(b)\rule{2.7 cm}{0 cm}(c)
\caption{Illustration of an arrangement of 6 points which can be shattered by an unbounded ``polygon'' with $N=2$ sides.  Plots (a), (b), and (c) illustrate examples of, respectively, a 1/5, a 2/4, and a 3/3 partition.}
\label{fig:Takacs}
\end{figure}

\begin{remark}
 In the $N=2$ case the decision boundary is two half-lines radiating from one point; in Tak{\'a}cs' terminology a ``polygon'' need not be bounded. Figure~\ref{fig:Takacs} illustrates examples of partitionings in the case of $N=2$ and six points. Any partition of the points into two groups can be constructed with the V-shaped border. 
\end{remark}

\begin{remark}
 \citet[p. 224]{Devroye96} make a similar construction with polygons and points on a circle, but use it only to prove that the class of all convex polygons has infinite VC dimension.
\end{remark}

\section{Proof of proposition \ref{OneBetter}}
\label{appendix1}
We begin with some geometrical preliminaries, as illustrated in Figure~\ref{fig:PolygonGeometry}.  A diagonal of a polygon is a line segment joining two non-adjacent vertices.  The vertices of a regular polygon all lie on the same circle, called the circumcircle of the polygon, and are equally spaced around it.  Each vertex of a regular $(2m-1)$-gon (for $m \geq 3$) is associated with two ``longest diagonals'', which connect the vertex in question to the vertices which are furthest away from it (both around the perimeter and in Euclidean metric). The two longest diagonals are reflections of each other in the diameter of the circumcircle which passes through the given vertex.

\begin{figure}[htb]
\centering
\includegraphics[width=0.6\linewidth]{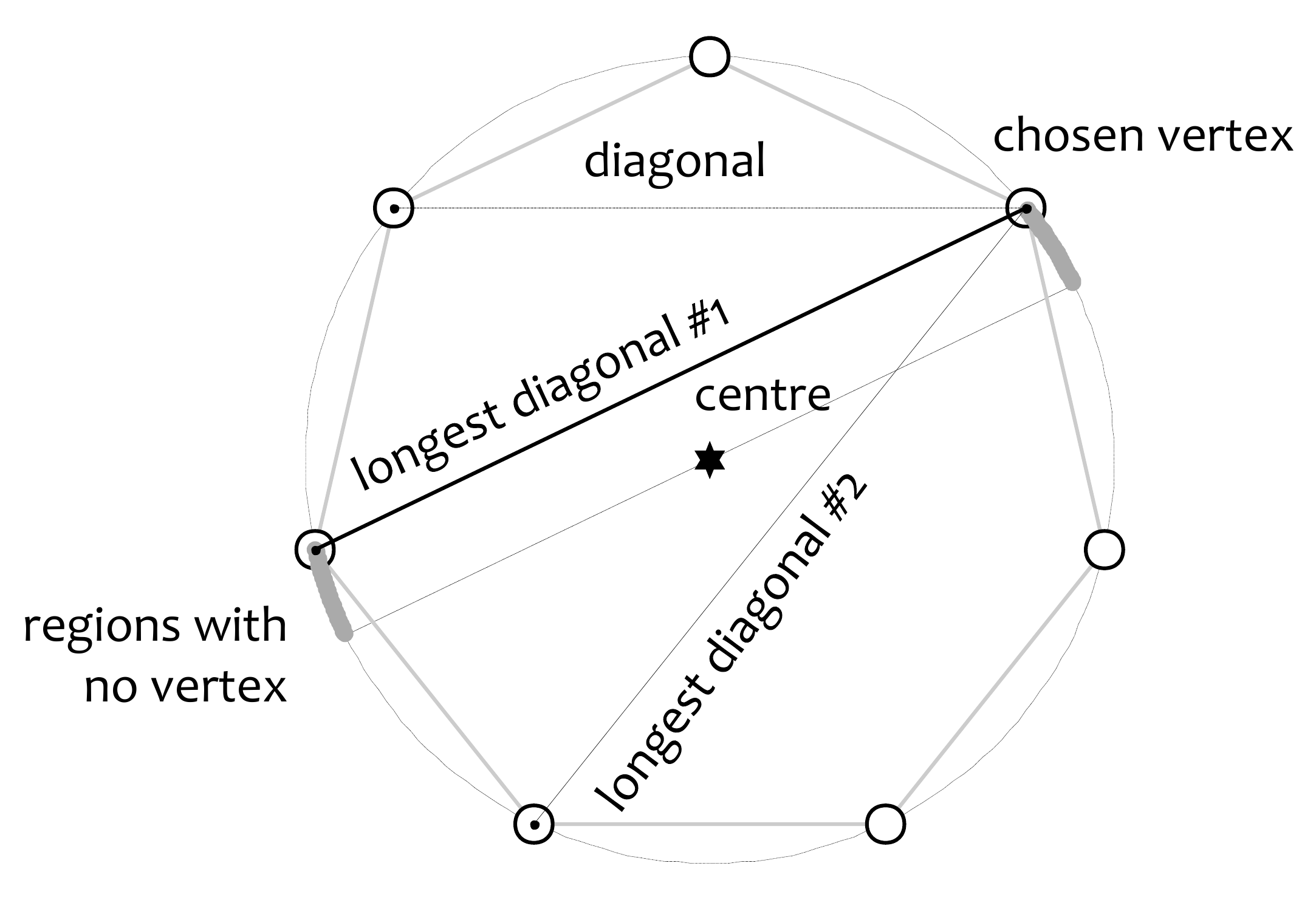}
\caption{Geometrical illustration for Lemma~\ref{geometry1}.}
\label{fig:PolygonGeometry}
\end{figure}

\begin{lemma}
 \label{geometry1}
 No vertices of a regular $(2m-1)$-gon lie between the line of a longest diagonal of the $(2m-1)$-gon and a line through the centre of the polygon which is parallel to this diagonal.
\end{lemma}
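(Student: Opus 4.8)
The plan is to reduce the statement to an elementary inequality about the projections of the vertices onto a single direction. First I would inscribe the regular $(2m-1)$-gon in a circle of radius $R$ about the origin, writing its vertices as $V_k = R(\cos k\phi, \sin k\phi)$ for $k = 0,\dots,2m-2$, where $\phi = 2\pi/(2m-1)$ is the angular spacing. By the rotational symmetry of the polygon it suffices to treat a single vertex, say $V_0$, and by the reflection symmetry in the diameter through $V_0$ (which swaps $V_{m-1}$ and $V_m$) it suffices to treat just one of its two longest diagonals, say the one joining $V_0$ to $V_{m-1}$. A short computation shows this diagonal subtends a central angle of $\pi - \phi/2$, so its supporting line $L$ lies at perpendicular distance $h = R\sin(\phi/4)$ from the centre, on the side indicated by the unit normal $\mathbf{u} = (\cos\psi, \sin\psi)$ with $\psi = (m-1)\phi/2 = \pi/2 - \phi/4$. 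The line $L'$ through the centre parallel to the diagonal is then $\{P : \mathbf{u}\cdot P = 0\}$, and $L$ is $\{P : \mathbf{u}\cdot P = h\}$. The lemma is therefore equivalent to the claim that no vertex has signed projection $\mathbf{u}\cdot V_k$ strictly in the open interval $(0,h)$.

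Writing $f(k) = \mathbf{u}\cdot V_k = R\cos(g(k))$ with $g(k) = \phi(k - (m-1)/2)$, I would first verify that $f(0) = f(m-1) = R\cos\psi = h$, so the two endpoints of the diagonal lie on $L$ itself. I would then split the remaining vertices according to the two arcs cut off by the diagonal. For the minor-arc vertices $k \in \{1,\dots,m-2\}$ the argument satisfies $|g(k)| \le (m-3)\phi/2 < (m-1)\phi/2 = \psi$, so $f(k) > R\cos\psi = h$ and these vertices lie strictly beyond $L$. For the major-arc vertices $k \in \{m,\dots,2m-2\}$ the monotone quantity $g(k)$ ranges over $[\psi+\phi,\,3\psi] = [\pi/2 + 3\phi/4,\; 3\pi/2 - 3\phi/4] \subset (\pi/2, 3\pi/2)$, whence $f(k) < 0$ and these vertices lie strictly on the far side of $L'$ from $L$. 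In every case $f(k) \notin (0,h)$, which is the desired conclusion.

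The only real obstacle is bookkeeping: correctly pinning down $\mathbf{u}$, the value of $h$, and which vertices fall on which arc, and then confirming that the projections $f(k)$ never slip into the thin slab. Each of the two case inequalities collapses to something trivial — the minor-arc bound reduces to $m - 3 < m - 1$, and the major-arc range is controlled entirely by its monotone endpoints — so once the reduction of the first paragraph is in place no delicate estimate remains. I would close by invoking the rotational and reflection symmetries noted at the outset to extend the conclusion from the chosen diagonal to every longest diagonal of the polygon.
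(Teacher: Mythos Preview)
Your argument is correct: the coordinate set-up is clean, the identification of the normal $\mathbf{u}$ and the slab width $h=R\sin(\phi/4)$ checks out, and the two case inequalities for the minor and major arcs are valid as stated. The appeal to rotational and reflective symmetry at the end is also legitimate.

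However, the route is quite different from the paper's. The paper gives a one-line proof by contradiction that uses only the \emph{definition} of ``longest diagonal'': if some vertex $V$ lay in the open strip between the diagonal and the parallel diameter, then the chord from $V$ to the endpoint of the diagonal farther from $V$ would subtend a larger central angle than the diagonal itself, and hence be strictly longer --- contradicting maximality. No coordinates, no trigonometry, no case split. Your approach, by contrast, parametrises everything explicitly and reduces the claim to two monotone cosine inequalities. What you gain is a fully explicit, quantitative picture (you know exactly where each vertex projects, which could feed directly into later distance estimates such as Lemma~\ref{widthlemma}); what you lose is brevity and the conceptual clarity that the result is really just a restatement of ``longest means longest''.
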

\begin{proof}
Recall that vertices of a regular polygon must all lie on the same circumcircle. If there were a vertex between the longest diagonal and a diameter of the circumcircle which does not intersect it, then a line from this vertex to the end of the longest diagonal further from it would be a diagonal longer than the longest diagonal.
\end{proof}

We will now introduce the arrangement of points which we will subsequently argue can be shattered by a NN-rule classifier with $m$ prototypes. 

\begin{arrangement}
  \label{arrangement1}
 (Figure~\ref{LemmaArrangement}.) Place $2m-1$ points as the vertices of a regular $(2m-1)$-gon, $m \geq 4$.  Two further points are placed on a line which runs through the centre of the $(2m-1)$-gon, and which is perpendicular to a line from one vertex (marked `$A$') through the centre.  The two points are placed on this line at an equal distance either side of the centre. This distance is sufficiently small that neither of the two points is separated from the centre by any of the diagonals of the $(2m-1)$-gon. This is equivalent to requiring that the two points lie within the smaller $(2m-1)$-gon formed by the central sections of the longest diagonals of the original $(2m-1)$-gon, between their mutual intersections.
\end{arrangement}

\begin{figure}[htb]
\centering
\includegraphics[width=0.55\linewidth]{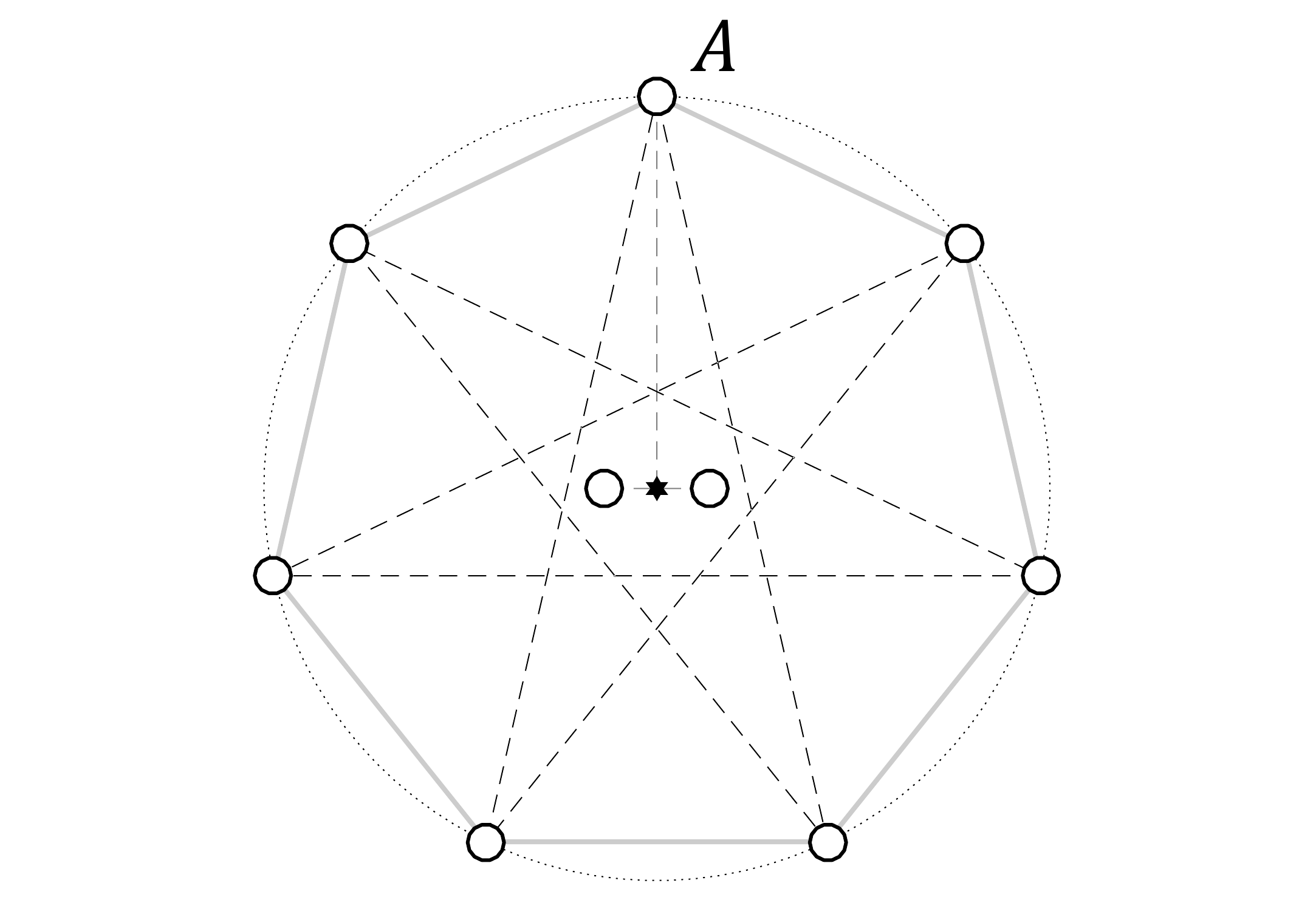}
\caption{Arrangement of 9 points which can be shattered by a NN-rule classifier with $m=4$ prototypes, but not by the set of classifiers whose decision boundary is a triangle.  7 of the points are vertices of a (convex) regular heptagon, which is shown together with its circumcircle and longest diagonals.  Also shown are segments of two lines used in the construction of the remaining two points, and the centre point.}
\label{LemmaArrangement}
\end{figure}

We now have two lemmata which establish some geometric properties of this arrangement of points.  Lemma \ref{arrangement} will show that there are certain sets of three points, which include one of the two centre points but not the other, which can be separated from the remaining points by two parallel lines.  This will be important when we come to demonstrate that the set of points can be shattered, because the labellings which are difficult to achieve are those for which the central points do not both have the same label.
\begin{lemma}
\label{arrangement}
Let $2m+1$ points be placed as Arrangement~\ref{arrangement1}, $m \geq 4$.  Call one of the inner points $B$, and the other $W$.  Let $C$ be the vertex nearest to $W$. Then, given any choice $D$ of vertex other than $C$, there exist vertices $D_1, D_2$, not adjacent to $D$, such that $\{D,D_i,B\}$ can be separated from all other points by two parallel lines. (Figure~\ref{fig:LemmaArrangementParallelLines}.)  
\end{lemma}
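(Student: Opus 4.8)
The plan is to reduce ``separation by two parallel lines'' to a statement about orthogonal projections, and then to exhibit the two companion vertices explicitly as the two near-antipodal vertices of the $(2m-1)$-gon lying on the side of $B$.

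First I would fix coordinates with the centre $O$ of the polygon at the origin, the line $BW$ as the horizontal axis (so that $B$ and $W$ are symmetric about $O$), and the vertex $A$ at the top, so that the whole arrangement is symmetric about the vertical axis. Two parallel lines bound a slab, and since by Arrangement~\ref{arrangement1} neither inner point is separated from $O$ by a diagonal, the only way to separate $\{D,D_i,B\}$ from the rest (which includes $W$, sitting across $O$ at distance $2\epsilon$ from $B$) is for one boundary of the slab to pass close to $O$, between $B$ and $W$. I would make this precise as a characterisation: writing $p(V)=u\cdot V$ for the projection in a direction $u$, the set $\{D,D_i,B\}$ is separable from the other points by two parallel lines if and only if there is a $u$ with positive horizontal component such that $D$ and $D_i$ are the two vertices of least positive projection onto $u$, all other vertices projecting either to $W$'s side of $O$ or beyond the chord $DD_i$. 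The slab between the line $\{u\cdot x=0\}$ through $O$ and a parallel line just outside the chord then contains $B$ (whose projection is a small positive number) together with $D$ and $D_i$, while $W$ and every remaining vertex falls outside it.

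Given this reduction, for a vertex $D\neq C$ I would take $D_1$ to be the endpoint, on the $B$-side, of a longest diagonal of the $(2m-1)$-gon issuing from $D$ (the vertex at circular distance $m-1$ from $D$ towards $B$), and $D_2$ to be the next vertex round the circle towards $B$ (at circular distance $m-2$). For $D_1$, choosing $u$ normal to the chord $DD_1$ makes $D$ and $D_1$ tie for least positive projection, and Lemma~\ref{geometry1} guarantees that the lune between this longest diagonal and the parallel diameter through $O$ is empty of vertices, so that no third vertex intrudes into the slab. For $D_2$ I would use the analogous slab normal to $DD_2$ and verify from the regular spacing of the vertices that the distance-$(m-2)$ chord likewise leaves its lune empty: the $m-3$ vertices on its short arc lie beyond the chord, and every remaining vertex projects to the $W$-side of $O$. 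Both $D_1$ and $D_2$ lie at circular distance at least $m-2\geq 2$ from $D$, hence are non-adjacent to it, which is where the hypothesis $m\geq 4$ enters.

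The hypothesis $D\neq C$ is precisely the condition guaranteeing that both of these chords lie strictly on the $B$-side. For a generic $D$ the towards-$B$ longest diagonal, and likewise the distance-$(m-2)$ chord, has its midpoint strictly on $B$'s side of the vertical axis; but for $D=C$, the vertex nearest $W$, the towards-$B$ longest diagonal degenerates onto the mirror-pair chord through the vertex nearest $B$ (its midpoint lands on the axis, so $u$ would have zero horizontal component and could not separate $B$ from $W$), while both distance-$(m-2)$ candidates fall on the $W$-side. Thus $C$ admits only a single companion, which is why the lemma must exclude it.

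I expect the main obstacle to be the uniform bookkeeping: certifying, for every position of $D$ relative to the $BW$ axis, exactly which vertices attain the two least positive projections, so that no stray vertex slips into the thin slab, and treating cleanly the borderline orientations in which a companion chord is perpendicular to the $BW$ axis (a mirror-symmetric pair of vertices), where the exact normal must be replaced by an infinitesimal tilt of $u$. Lemma~\ref{geometry1} disposes of this for the longest-diagonal companion $D_1$; the distance-$(m-2)$ companion $D_2$ needs the direct spacing estimate, and it is in showing that the distance-$(m-2)$ chord is exactly the borderline case, with distance $m-3$ already admitting an intruding vertex, that the argument is most delicate.
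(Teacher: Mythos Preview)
Your approach is essentially the same as the paper's. Both identify $D_1$ as the endpoint of the $B$-side longest diagonal from $D$ and $D_2$ as the vertex adjacent to $D_1$ at circular distance $m-2$ from $D$; both construct the separating slab with boundaries parallel to the chord $DD_i$, one passing through the centre and the other just beyond the chord; and both invoke Lemma~\ref{geometry1} to certify that the strip for $D_1$ contains no stray vertex. The paper's argument for $D_2$ differs in detail: rather than your direct spacing estimate, it introduces the second longest diagonal $c''$ emanating from $D_1$ (the one not going to $D$), observes that $c''$ is parallel to $DD_2$ by symmetry and farther from the centre, and then applies the longest-diagonal reasoning to $c''$ to conclude that the region between $DD_2$ and the parallel diameter is vertex-free. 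Your projection calculation achieves the same end more directly.

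One small slip: the degenerate orientation requiring a tilt of $u$ occurs when the companion chord is \emph{parallel} to $BW$ (so that $u$ is perpendicular to $BW$ and $p(B)=p(W)=0$), not when the chord is perpendicular to $BW$ as you wrote; your parenthetical ``a mirror-symmetric pair of vertices'' shows you have the right picture, since such a pair gives a horizontal chord in your coordinates. The paper handles this case in the same way, by an infinitesimal rotation of the slab. Your added discussion of why the hypothesis $D\neq C$ is needed is not present in the paper but is a welcome clarification.
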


\begin{figure}
\centering
\begin{tabular}{cc}
\includegraphics[width=0.5\linewidth]{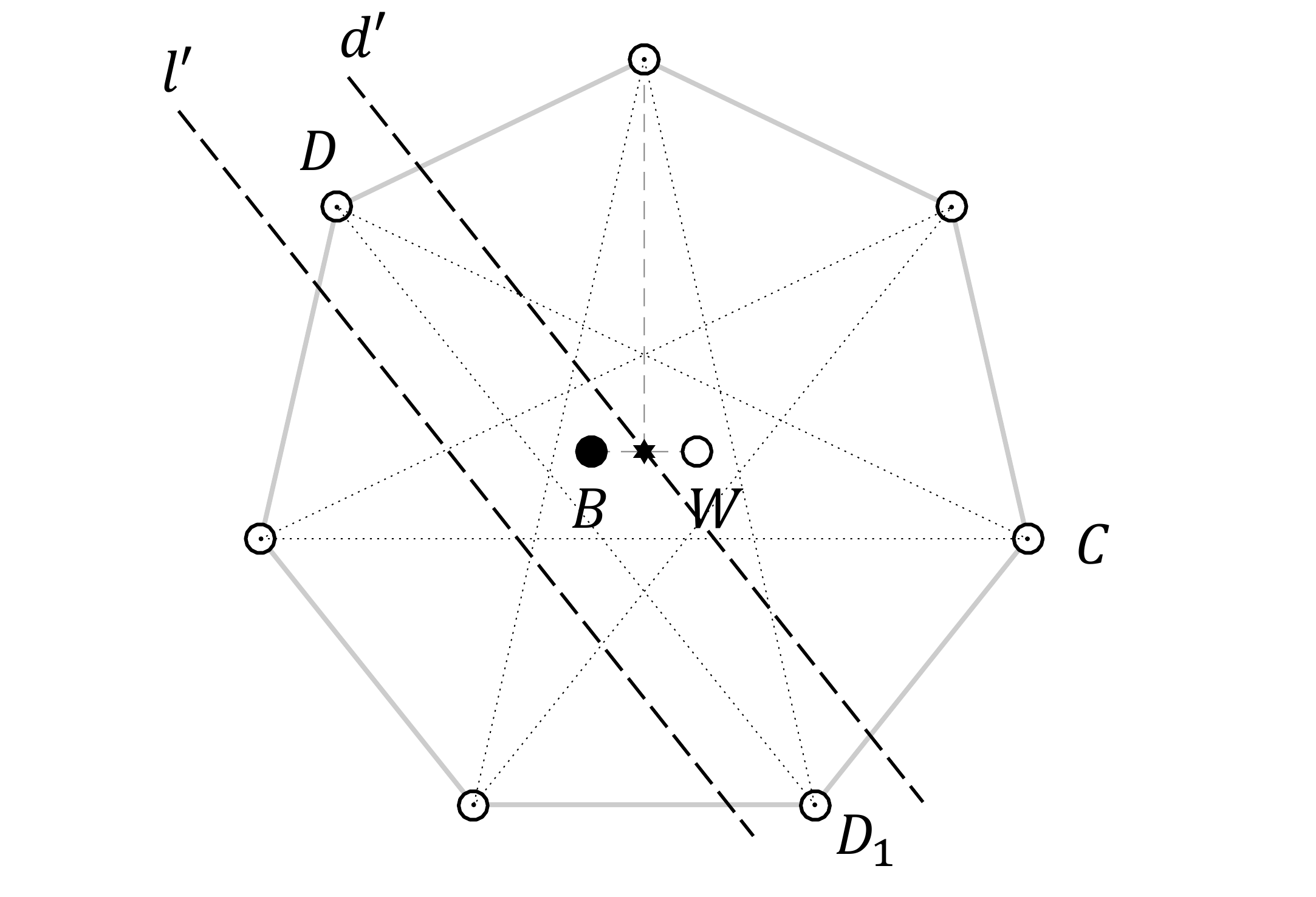}&
\includegraphics[width=0.5\linewidth]{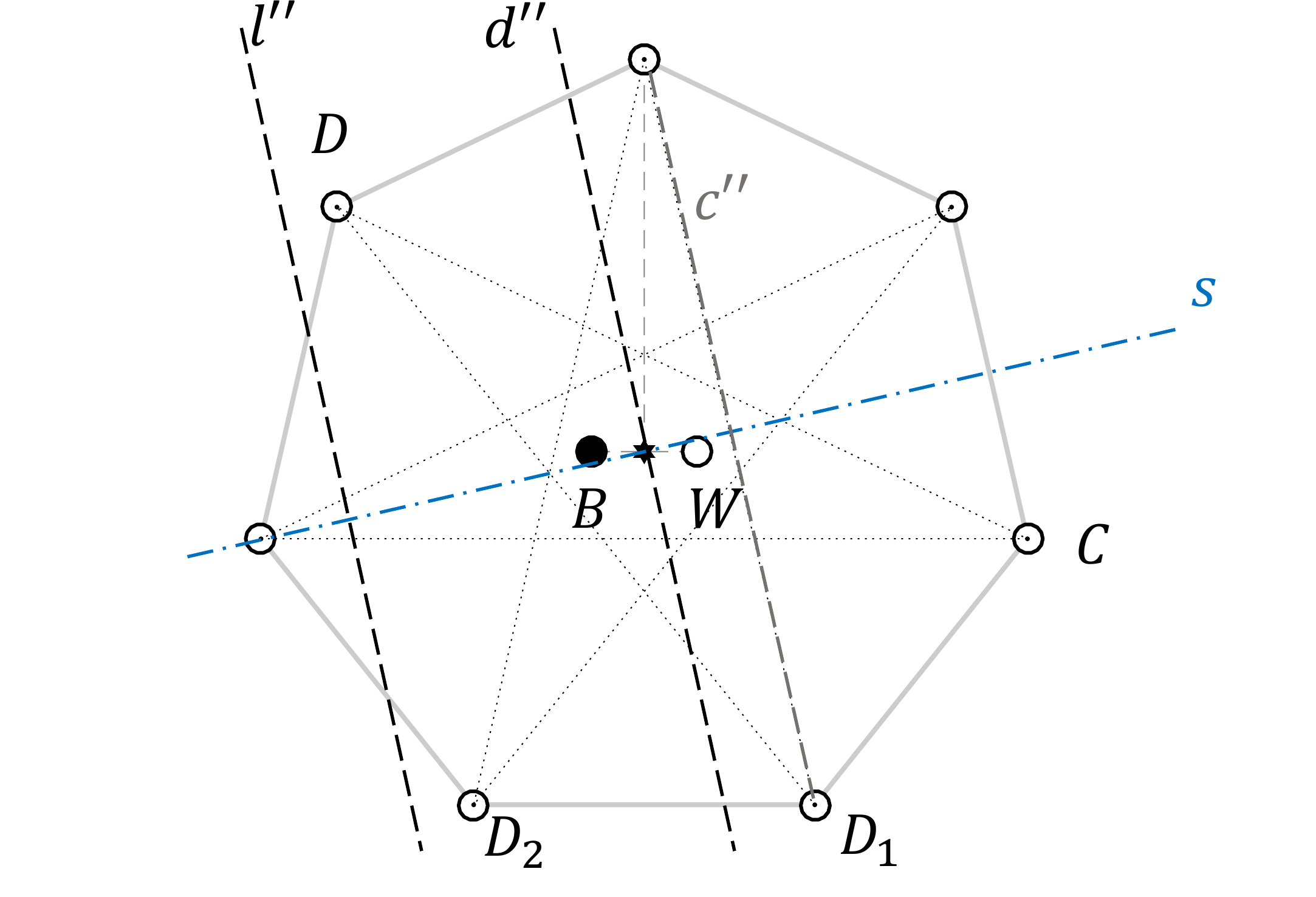}\\
(a) $\{D,B,D_1\}$ separated.&(b) $\{D,B,D_2\}$ separated.\\
\end{tabular}
\caption{Given a choice B from the two centre points, and a choice D of any vertex apart from C, there are two sets of three points, containing the two chosen points, which can be separated from the remaining points by two parallel lines. The heptagon illustrated in (b) shows the worst case for the separation of these lines, at just under two-thirds of the circumradius.  The lines may be placed closer together if $m$ is greater or if $D_1$ is chosen instead of $D_2$.  Note that $l'$ (resp. $l''$) may be placed arbitrarily close to diagonal  $DD_1$ (resp.\ $DD_2$).}
\label{fig:LemmaArrangementParallelLines}
\end{figure}


\begin{proof}
 Denote the diameter of the circumcircle which passes through point $D$ by $d$. One of the two longest diagonals from $D$ passes on the same side of $d$ as $B$, and the other passes on the side of $W$.  One of the two points which we will prove has the desired property is the point at the opposite end of that longest diagonal from D which passes on the side of $B$; call this point $D_1$.

We will construct two parallel lines such that $D$, $B$, and $D_1$ are between the lines, and the $2m-2$ remaining points are not (Figure~\ref{fig:LemmaArrangementParallelLines}(a)).  The required lines are parallel to $DD_1$. One of the lines is a diameter, $d'$.  By Lemma \ref{geometry1} there is no vertex between $d'$ and $DD_1$. The second parallel line, $l'$, is placed on the far side of $DD_1$ from the centre. It can be placed arbitrarily close to $DD_1$, so it is always possible to place it such that that no vertices lie between $l'$ and $DD_1$.  Therefore the only points between $d'$ and $l'$ are $B$, $D$, and $D_1$. 
 
$D_2$ is the vertex adjacent to $D_1$ on the side of $B$ (Figure~\ref{fig:LemmaArrangementParallelLines}). 
  
The set $\{D,D_2,B\}$ is separated from the rest of the points in the construction by two parallel lines as follows.  Consider the line of symmetry  $s$ of the polygon for which $D_2$ is the reflection of $D$.  The two lines required are perpendicular to $s$ (unless $s$ passes through the vertex defined in Arrangement \ref{arrangement1} as $A$, which case we discuss shortly). The first line, $d''$ passes through the centre of the polygon. The second line, $l''$, is placed further from the centre than $DD_2$, a sufficiently small distance from $DD_2$ that no vertices lie between it and $DD_2$. 

We now argue that no vertices lie between $d''$ and $DD_2$. Consider the longest diagonal from $D_1$ which does not go to $D$.  Call this longest diagonal $c''$.  Now suppose there were a vertex on the arc of the circumcircle between $D_1$ and $D_2$: then $D_2$ would not be an adjacent vertex to $D_1$, which it is by definition.  Suppose instead there were a vertex on the arc of the circle between $D$ and the vertex at the other end of $c''$ from $D_1$.  Then a line segment from this vertex to $D_1$ would be a diagonal longer than the longest diagonal.  Therefore there are no vertices between $c''$ and $DD_2$.  Now, $d''$ is parallel to $DD_2$, therefore, by the symmetry of the regular $(2m-1)$-gon in $s$, it is parallel to $c''$, and it is closer to $DD_2$ than $c''$ is, since it passes through the centre.  So the region between $d''$ and $DD_2$ is entirely contained in the region between $c''$ and $DD_2$, so the fact that there is no vertex in the latter region implies there is no vertex in the former region.
 
If $BW$ is perpendicular to $s$, then $d''$ and $l''$ are placed at a suitably small angle (it may be arbitrarily small) to the perpendicular to the line of symmetry, such that $B$ falls between the lines and $W$ does not. 
 
\end{proof}
 
We shall need to know the width of the strip constructed in the previous lemma to contain $D$, $B$, and $D_2$.
 
\begin{lemma}
\label{widthlemma}
 Let $2m+1$ points be placed as Arrangement~\ref{arrangement1}, $m \geq 4$.  Let $D$ and $D_2$ be defined as in Lemma \ref{arrangement} (Figure~\ref{fig:LemmaWidth}).  Then the distance from the centre of the polygon to $DD_2$ is less than 0.63 times the circumradius.
\end{lemma}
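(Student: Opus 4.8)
The plan is to reduce the claim to a single trigonometric estimate, by identifying $D_2$ as an explicit vertex of the regular $(2m-1)$-gon and writing the centre-to-chord distance in closed form. First I would place the centre at the origin with circumradius $R$ and label the vertices $V_0,\dots,V_{2m-2}$ at angles $\tfrac{2\pi k}{2m-1}$, taking $D=V_0$ without loss of generality. By the discussion preceding Lemma~\ref{geometry1}, the two longest diagonals from $D$ run to $V_{m-1}$ and $V_m$, so $D_1$ is one of these — say $D_1=V_{m-1}$, the other choice being a mirror image. Per the construction in Lemma~\ref{arrangement}, $D_2$ is the neighbour of $D_1$ lying on the side of $B$, hence $D_2=V_{m-2}$; I would note in passing that the perpendicular bisector of $DD_2$ is a symmetry axis of the polygon, which is precisely the axis $s$ invoked in Lemma~\ref{arrangement}.

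With $D=V_0$ and $D_2=V_{m-2}$, the central angle subtended by the chord $DD_2$ is $\tfrac{2\pi(m-2)}{2m-1}$, and the elementary chord-distance relation gives the distance from the centre to (the line of) $DD_2$ as
\[
\rho(m)=R\cos\!\left(\frac{\pi(m-2)}{2m-1}\right).
\]
It then remains only to show $\rho(m)<0.63\,R$ for every $m\ge 4$.

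The final step is a monotonicity argument. The angle $\tfrac{\pi(m-2)}{2m-1}$ is increasing in $m$ (its derivative with respect to $m$ has positive numerator $3$) and lies in $\left(0,\tfrac{\pi}{2}\right)$, since $\tfrac{m-2}{2m-1}<\tfrac12$; on this interval $\cos$ is strictly decreasing, so $\rho(m)$ is decreasing in $m$ and attains its maximum over $m\ge4$ at $m=4$. There $\rho(4)=R\cos(2\pi/7)\approx 0.6235\,R<0.63\,R$, which is exactly the heptagonal ``worst case'' flagged in the caption of Figure~\ref{fig:LemmaArrangementParallelLines}.

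I do not expect a serious difficulty; the one place demanding care is the vertex bookkeeping — verifying which vertices realise the roles of $D_1$ and $D_2$, and checking that the mirror-image choice $D_1=V_m$, $D_2=V_{m+1}$ produces the same answer, because the index $V_{m+1}$ gives a reflex central angle whose non-reflex value is again $\tfrac{2\pi(m-2)}{2m-1}$. Once the correct central angle is pinned down, the remainder is a routine monotone-cosine bound.
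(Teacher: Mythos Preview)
Your proposal is correct and follows essentially the same route as the paper. The only cosmetic difference is that the paper computes the angle between $d''$ and the radius through $D$ as $\theta=\dfrac{3\pi}{2(2m-1)}$ and writes the distance as $R\sin\theta$, whereas you take half the central angle $\dfrac{\pi(m-2)}{2m-1}$ and write the distance as $R\cos$ of that; these are complementary angles, so the two expressions coincide (in particular $\cos(2\pi/7)=\sin(3\pi/14)$), and both proofs finish with the same monotonicity-in-$m$ argument and the evaluation at $m=4$.
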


\begin{figure}[htb]
\centering
\includegraphics[width=0.7\linewidth]{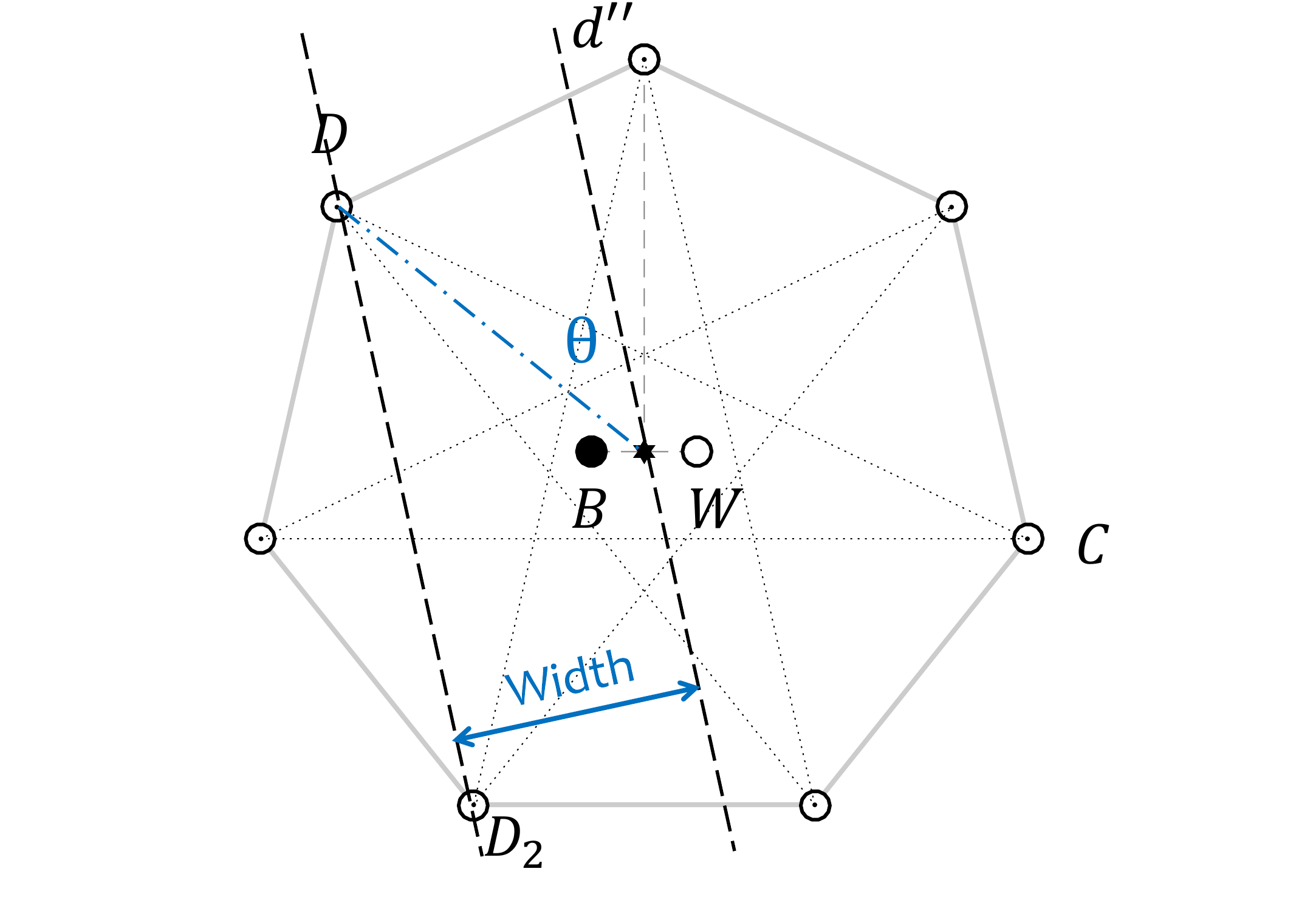}
\caption{Greatest possible distance from $DD_2$ to centre}
\label{fig:LemmaWidth}
\end{figure}

\begin{proof}
 By an elementary geometrical argument, the angle between the line passing through the centre and $D$, and $d''$, is 
 \begin{equation}
  \theta = \dfrac{3 \pi}{2(2m-1)}.
 \end{equation}
Therefore the distance between $DD_2$ and the centre is
\begin{equation}
 \mathrm{Width} = R\sin\dfrac{3 \pi}{2(2m-1)},
\end{equation}
where $R$ is the circumradius.  This width decreases monotonically with $m$.  The largest value it may take in our construction is therefore associated with the smallest value of $m$ we consider, $m=4$.  In this case,
\begin{equation}
 \mathrm{Width} = R\sin \dfrac{3 \pi}{14} < 0.63R.
\end{equation}

\end{proof}

We are now in a position to prove the proposition.

\begin{proposition}
 \begin{equation}
  \mathrm{VCdim}(\mathrm{1NN}(2,m)) \geq 2m + 1
 \end{equation}
for $m \geq 4$.
\end{proposition}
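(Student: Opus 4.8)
The plan is to show that the $2m+1$ points of Arrangement~\ref{arrangement1} are shattered by $\mathrm{1NN}(2,m)$; that is, for every dichotomy of these points into labels $+1$ and $-1$ I must exhibit a reference set of $m$ prototypes whose nearest-neighbour rule reproduces that dichotomy. Throughout I will lean on Lemma~\ref{subset}, which realises any convex $(m-1)$-faceted polygon as the decision boundary of $m$ prototypes (one interior prototype and $m-1$ reflections) and, in the degenerate $N=2$ case noted after Lemma~\ref{Takacs1}, turns a slab between two parallel lines into a 1NN decision region at the cost of three prototypes.

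First I would dispose of the easy dichotomies, those in which the two inner points $B$ and $W$ carry the same label. By Arrangement~\ref{arrangement1} neither inner point is separated from the centre by any diagonal of the $(2m-1)$-gon, so both lie in the small central polygon; hence any convex polygon containing the centre contains both $B$ and $W$. Since the $2m-1$ circle points are in convex position and odd in number, the points carrying the label opposite to the interior form at most $m-1$ cyclic runs, so Takács' construction (the proof of Lemma~\ref{Takacs1}, with $N=m-1$) yields a convex $(m-1)$-gon that picks out exactly the desired circle points together with the central region. By Lemma~\ref{subset} this costs $m$ prototypes, and it classifies $B$ and $W$ correctly because they share the interior label. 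This settles every dichotomy with $B$ and $W$ alike.

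The substance of the proof is the remaining case, in which $B$ and $W$ receive opposite labels; say $B$ is $+1$ and $W$ is $-1$. Here a single convex polygon cannot suffice within budget: $W$ sits essentially at the centre, so a $+1$-labelled circle point on the far side of the centre from $B$ can be joined to $B$ only by a \emph{non-convex} region that skirts past $W$, and it is exactly this non-convexity that distinguishes $\mathrm{1NN}(2,m)$ from the polygon class and supplies the extra unit in the bound. My plan is to build the $+1$ region from the parallel-line slabs of Lemma~\ref{arrangement}. For a vertex $D$ that must be grouped with $B$, that lemma provides non-adjacent vertices $D_1,D_2$ and a slab containing exactly $\{D,D_i,B\}$ and excluding every other point, in particular $W$; such a slab is a 1NN decision region that reaches across the centre from $D$ to the far vertex $D_i$ while leaving $W$ outside. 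I would use such a slab as the ``spine'' of the $+1$ region, letting its two parallel bounding lines double as the cutting lines that separate the runs of $-1$ circle points nearest the ends $D$ and $D_i$, and spend the remaining prototypes cutting the other runs exactly as in the polygon case. The width bound of Lemma~\ref{widthlemma} (distance from centre to $DD_2$ below $0.63R$) is what lets the slab be positioned to capture the near-central point $B$ while still excluding $W$, and the tilt described at the end of Lemma~\ref{arrangement} handles the degenerate orientation where $BW$ is perpendicular to the relevant axis of symmetry.

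The hard part, and the step I expect to consume most of the work, is the prototype bookkeeping in this last case: I must verify that separating $B$ from the adjacent $W$ and correctly sorting all $2m-1$ arbitrarily labelled circle points can be achieved \emph{simultaneously} with no more than $m$ prototypes. The tension is that sorting the circle points alone can already demand the full budget of an $(m-1)$-gon, so the $B/W$ separation must be absorbed into edges already spent on the circle points rather than paid for separately; showing that the slab's parallel edges can always be oriented and placed to realise this sharing—across every run pattern and every admissible choice of $D$—is the crux. A careful case analysis on the position of the label-$C$ vertex and of the runs flanking $D$ and $D_i$, using Lemma~\ref{geometry1} to guarantee that no stray vertex intrudes between the slab edges and the governing diagonals, should close the argument.
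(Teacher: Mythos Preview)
Your high-level plan matches the paper's: the same-label case for the two inner points reduces to Tak\'acs' $(m-1)$-gon, and the opposite-label case hinges on the slab of Lemma~\ref{arrangement}. But your execution of the hard case diverges from the paper in two places that make the bookkeeping you flag genuinely problematic rather than merely tedious.

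First, you never pass to the minority class. The paper lets ``black'' denote the colour with at most $m$ points, so that $B$ is black and there are at most $m-1$ black \emph{vertices}. Without this reduction your budget does not close: if $+1$ happened to be the majority colour there could be $m$ or more $+1$ vertices on the circle, and no slab-plus-cuts scheme with $m$ prototypes will reach them all.

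Second, your ``spine with shared edges'' picture is not how the paper spends its prototypes, and it is not clear it can be made to work. The slab of Lemma~\ref{arrangement} contains \emph{only} $D$, $D_i$, and $B$ among the $2m+1$ points; adding polygon-style faces to it (extra prototypes of the \emph{opposite} colour, ``as in the polygon case'') only shrinks the convex Voronoi cell of the lone interior prototype and cannot recover any further $+1$ vertex outside the slab. The paper instead builds a deliberately \emph{disconnected} black region: three prototypes---one black flanked by two white---realise the slab and capture $B$ together with two black vertices $P,P_i$ (Lemma~\ref{widthlemma} keeps all three prototypes inside the circumcircle); each of the remaining at most $m-3$ black vertices then gets its own black prototype, placed outside the circle as the reflection of the nearer white prototype in a chord that isolates that vertex (or an adjacent black pair). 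That is $3+(m-3)=m$ prototypes exactly, with no edge-sharing. A short pigeonhole argument covers the sub-case where neither $P_1$ nor $P_2$ is black: either some other black vertex $Q$ has a black $Q_i$, or an adjacent black pair exists and one can fall back to a two-point slab $\{B,\text{vertex}\}$ while letting a single exterior prototype cut the pair. Replacing your edge-sharing idea with the minority-class reduction and this ``three inside, $m-3$ outside'' scheme is what actually makes the count come out.
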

\begin{proof}
 We shall prove that the set of $2m+1$ points placed as Arrangement~\ref{arrangement1} can be shattered by NN-rule classifiers using $m$ prototypes.  That is, we will prove that all labellings of these points can be correctly classified by a NN-rule classifier with $m$ prototypes in the reference set.  
 
 If the two inner points have the same label, then the situation is the same as described for the polygon classifier in Appendix \ref{TakacsAppendix}: an $(m-1)$-gon can be constructed containing the two central points and all the points on the circle of the same label, just as if there were only one central point.
 
 Consider now the case where the two interior points are of different labels.  We will refer to the class less-represented among a given labelling of the $2m+1$ points as ``black'', and the other class as ``white''.  The black class, by this definition, contains at most $m$ points.  It remains to prove that all labellings in which the interior points have different colours can be labelled by a NN-rule classifier with $m$ prototypes.
 
 Identify the black and white interior point with the points B and W respectively of Lemma~\ref{arrangement}.  Point C is then defined as in that lemma.
 
 Suppose there are $m$ points in the ``black'' class, of which $m-1$ are vertices of the polygon.  Since $m \geq 4$, two of the $m$ black points must then be vertices of the polygon other than C.  Let one of these these vertices be P, and define $P_1$ and $P_2$ to have the same relation to P as $D_1$ and $D_2$ have to $D$ in Lemma \ref{arrangement} (Figure~\ref{fig:PropositionSeparation}).  If either $P_1$ or $P_2$ is black, then that point, with P and the black centre point, can be separated from the remaining points by two parallel lines as described in Lemma \ref{arrangement}.  If both $P_1$ and $P_2$ are black, pick one of them arbitrarily to make the construction. By Lemma \ref{widthlemma}, the distance between these parallel lines is less than two thirds of the circumradius.  These parallel lines can therefore be achieved as a decision surface by three prototypes all of which are inside the circle: a black prototype is placed on 
 the line of the symmetry of the polygon which is perpendicular to these two lines, halfway between them, and therefore less than one third of the circumradius away from the centre of the circle.  Two white prototypes are placed as the reflections of this black prototype in the two parallel lines; since the black prototype is less than one-third of the radius away from the centre, the white prototypes are within the circle.  Each of the remaining $m-3$ black vertices can be separated from the rest of the points by a single line (due to the convexity of the polygon, as in the argument for the polygon classifier).  This line can be achieved as a decision surface by placing a black prototype as the reflection in this line of the nearer white prototype. All vertices not so separated and not contained between the two parallel lines first constructed will be classified as white by the two white prototypes.  Thus the $2m+1$ points are classified correctly by $m$ prototypes, 3 within the polygon and $m-3$ outside it.
 
\begin{figure}
\centering
\includegraphics[width=0.65\linewidth]{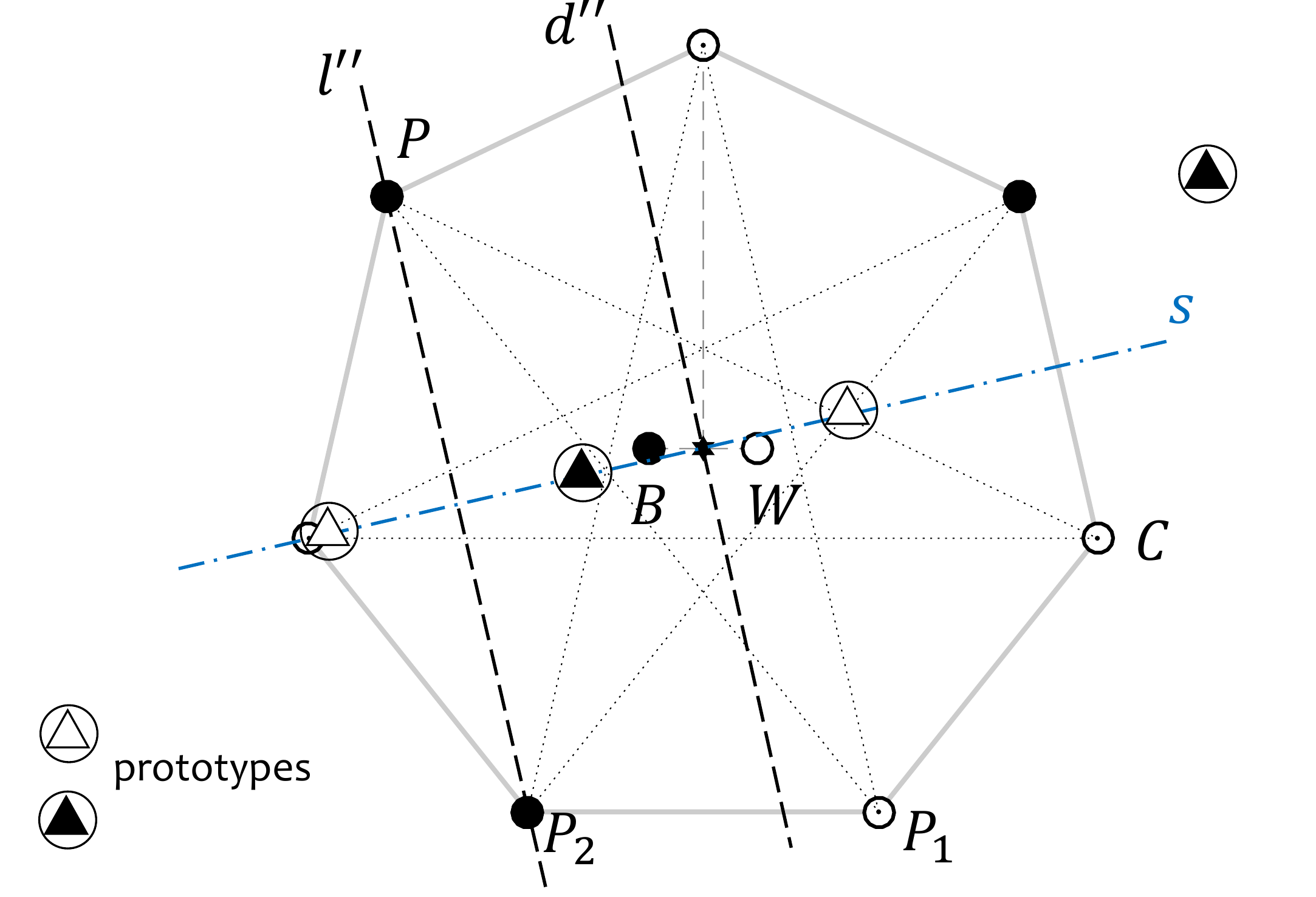}
\caption{Example labelling of 9 points by 4 prototypes. $P$, $B$, and $P2$ are labelled by a black prototype inside the circle.  They are separated from all other points by parallel decision surfaces created by two white prototypes which are also inside the circle.  The remaining black vertices (one such, in this case) are labelled by one black prototype each, placed outside the circle so as to be the nearest prototype only for the relevant vertex.}
\label{fig:PropositionSeparation}
\end{figure}

Now consider the case in which neither $P_1$ nor $P_2$ is black. In this case, $m-1$ black labels must be distributed among the $2m-3$ vertices which are neither $P_1$ nor $P_2$.  Consider the case where no two adjacent vertices are both black.  If no two adjacent vertices are both black, then there is only one possible arrangement (since $P_1$ and $P_2$ are adjacent): The vertices either side of the $\{P_1,P_2\}$ pair must both be black, and every second vertex between them (going around the circle) must be black.  Let $Q$ be a black vertex in this arrangement which is neither $P$ nor $C$ (recall that there must be at least one such vertex, because $m \geq 4$).  Let the two points $Q_1$, $Q_2$ be the two points which have the same relation to $Q$ as $D_1$ and $D_2$ have to $D$ in the proof of Lemma \ref{arrangement}.  At most one of $Q_1$ and $Q_2$ can be an element of $\{P_1,P_2\}$, because $Q$ is not $P$.  Therefore, since $Q_1$ and $Q_2$ are adjacent vertices, one of them must be black, and can therefore be separated along with $Q$ and the black centre point by two parallel lines, allowing correct classification of all the points as in the case where $P_1$ or $P_2$ was black.  Thus, either the points can be classified this way, or there are two adjacent vertices which are both black.  
 
Two cases now remain to be considered.  First, the case in which neither $P_1$ nor $P_2$ is black, but the remaining vertices do not have the only labelling which ensures no two adjacent vertices are black.  Second, the case in which there are fewer than $m$ points in the black class.  The construction is essentially the same for both of these cases.
The construction proceeds as previously, but it is only necessary to contain two black points between the parallel lines.  $B$ and one of the black vertices not part of the adjacent pair (if there is only one pair, or not the common vertex between two pairs if there are precisely three black vertices forming a run of three) are separated by two parallel lines (an arbitrarily small distance either side of the line containing these two points); these lines are achieved as a decision surface by three prototypes within the circle, a black prototype between two white prototypes as before.  The remaining $m-3$ prototypes are black, and are placed as reflections of the nearer white prototype in a line separating a single black vertex or an adjacent pair of black vertices from the other points.  Not all $m-3$ prototypes may be needed for this purpose, but unneeded prototypes may be placed at a large distance from the points so as not to affect the classification.
\end{proof}

\section{Proof of Proposition \ref{Wbound}}
\label{LambertResult}
\begin{proof}
 From the definition of VC dimension in terms of shatter coefficient, $h(C)$ is given by the largest $n$ solving
  \begin{equation}
  S(C,n) = 2^n.
  \label{VCdef}
 \end{equation}
 Now, if a continuous function $f(n)$ satisfies 
 \begin{equation}
 f(n) \geq S(C,n) \quad \forall n,
 \end{equation}
 and also $f(n)$ grows more slowly than $2^n$ asymptotically, then the largest $n$ solving 
 \begin{equation}
  f(n) = 2^n
  \label{feq}
 \end{equation}
 must be at least as large as the largest $n$ solving (\ref{VCdef}).  The bounds of Lemma \ref{DevroyeLemma} satisfy these conditions, so we may substitute them into (\ref{feq}) to find that $h(C)$ is bounded above by the largest $n$ solving
 \begin{equation}
  2^m n^q = 2^n.
  \label{VCeq}
 \end{equation}
 Equation (\ref{VCeq}) is solved by
 \begin{equation}
  n = -\frac{q}{\log{2}} W \left(-\frac{\log 2}{q} 2^{-\frac{m}{q}} \right),
  \label{Wresult}
 \end{equation}
 where the solution on the $W_{-1}$ branch is the largest real solution, which is therefore the upper bound.
\end{proof}

\end{document}